\newcommand{\reals}{\mathbb{R}}
\newcommand{\abs}[1]{\ensuremath \left| #1 \right|}
\newcommand{\norm}[1]{\ensuremath \lVert#1\rVert}
\newcommand{\ag}[1]{\ensuremath \left\langle#1\right\rangle}
\providecommand{\OO}{\mathcal{O}}
\newcommand{\trace}{\trm{tr}}
\newcommand{\aeq}[1]{\begin{align} #1 \end{align}}
\newcommand{\aeqs}[1]{\begin{align*} #1 \end{align*}}
\newcommand{\beq}[1]{\begin{equation}#1\end{equation}}
\newcommand{\trm}[1]{\mathrm{#1}}
\newcommand{\la}{\leftarrow}
\providecommand\f[2]{\ensuremath \frac{#1}{#2}}
\providecommand\rbrac[1]{\ensuremath \left(#1\right)}
\providecommand\sqbrac[1]{\ensuremath \left[#1\right]}
\providecommand\cbrac[1]{\ensuremath \left\{#1\right\}}
\newtheorem{theorem}{Theorem}
\newtheorem{lemma}[theorem]{Lemma}
\theoremstyle{definition}
\newtheorem{remark}[theorem]{Remark}
\renewcommand{\P}{\trm{P}}
\newcommand{\E}{\mathbb{E}}
\providecommand{\ones}{\mathbbm{1}}
\providecommand{\ind}{{\bf 1}}
\renewcommand{\implies}{\Rightarrow}
\newcommand{\s}{\sigma}
\renewcommand{\r}{\rho}
\renewcommand{\t}{\tau}
\renewcommand{\th}{\theta}
\renewcommand{\a}{\alpha}
\newcommand{\e}{\epsilon}
\newcommand{\g}{\gamma}
\renewcommand{\d}{\delta}
\newcommand{\D}{\Delta}
\renewcommand{\l}{\lambda}
\newcommand{\Th}{\Theta}
\def \OO {\mathcal{O}}
\newcommand{\var}{\trm{var}}
\newcommand{\Berdist}{\trm{Ber}}
\newcommand{\zoo}{\{0, 1\}}
\newcommand{\W}[1]{W^{#1}}
\newcommand{\h}[1]{h^{#1}}
\newcommand{\crt}{\trm{crt}}
\newcommand{\tH}{\widetilde{H}}
\newcommand{\ip}{{i_1, i_2, \ldots, i_p}}
\newcommand{\Jip}{J_{i_1, \ldots, i_p}}
\newcommand{\Jlp}{J_{l_1, \ldots, l_p}}
\newcommand{\Kip}{K_{i_1, \ldots, i_p}}
\newcommand{\Wip}{W_{i_1, \ldots, i_p}}
\newcommand{\gip}{{\g_{i_1, \ldots i_p}}}
\newcommand{\sip}{{\s_{i_1} \ldots \s_{i_p}}}
\newcommand{\Hnp}{H_{n,p}}
\newcommand{\hY}{\widehat{Y}}
\newcommand{\data}{\trm{input}}
\newcommand{\convolution}{\trm{conv}}
\newcommand{\meanpool}{\trm{meanpool}}
\newcommand{\dropout}{\trm{dropout}}
\newcommand{\linear}{\trm{linear}}
\newcommand{\softmax}{\trm{softmax}}
\newcommand{\block}{\trm{block}}
\newcommand{\mnistfc}{\trm{mnistfc}}
\newcommand{\tf}{\widetilde{f}}
\newcommand{\lf}{\overline{f}}
\newcommand{\ignore}[1]{}
\title[On the Energy Landscape of Deep Networks]{\centering On the Energy Landscape of Deep Networks}
\author{Pratik Chaudhari}
\thanks{Computer Science Department, University of California, Los Angeles}
\author{Stefano Soatto}
\thanks{Email: \href{mailto:pratikac@ucla.edu}{pratikac@ucla.edu}, \href{mailto:soatto@ucla.edu}{soatto@ucla.edu}}
\begin{document}

\begin{abstract}
We introduce ``AnnealSGD'', a regularized stochastic gradient descent algorithm motivated by an analysis of the energy landscape of a particular class of deep networks with sparse random weights. The loss function of such networks can be approximated by the Hamiltonian of a spherical spin glass with Gaussian coupling.
While different from currently-popular architectures such as convolutional ones, spin glasses are amenable to analysis, which provides insights on the topology of the loss function and motivates algorithms to minimize it. Specifically, we show that a regularization term akin to a magnetic field can be modulated with a single scalar parameter to transition the loss function from a complex, non-convex landscape with exponentially many local minima, to a phase with a polynomial number of minima, all the way down to a trivial landscape with a unique minimum. AnnealSGD starts training in the relaxed polynomial regime and gradually tightens the regularization parameter to steer the energy towards the original exponential regime.
Even for convolutional neural networks, which are quite unlike sparse random networks, we empirically show that AnnealSGD improves the generalization error using competitive baselines on MNIST and CIFAR-10.
\end{abstract}

\maketitle

\section{Introduction}
\label{s:intro}

Complex deep network architectures --- Inception~\citep{szegedy2014going}, ResNets~\citep{he2016identity}, generative adversarial nets~\citep{goodfellow2014generative}, value-policy networks~\citep{silver2016mastering}, to name a few --- are hard to train~\citep{sutskever2013importance}. The work-horse of deep learning, stochastic gradient descent (SGD)~\citep{bottou1998online}, therefore employs a number of techniques such as momentum~\citep{graves2013generating}, weight decay, adaptive step-sizes~\citep{duchi2011adaptive,kingma2014adam} typically in conjunction with a hyper-parameter search~\citep{bergstra2012random}. These techniques are applicable to general non-convex problems, and we would like to improve their performance when used to train deep networks. This has motivated attempts to understand the error landscape of various models of neural networks, e.g., deep linear networks~\citep{saxe2013exact}, deep Gaussian processes~\citep{duvenaud2014avoiding}, spin glasses~\citep{choromanska2014loss}, tensor factorization~\citep{haeffele2015global,janzamin2015beating}, robust ensembles~\citep{baldassi2016unreasonable} etc. However, constructing efficient versions of SGD that exploit these insights still remains a challenge and this is our primary motivation in this paper.


We consider a model for deep networks with random, sparse weights; we show that the loss function of such a network resembles the Hamiltonian of a spin glass which is a well-studied model in statistical physics~\citep{talagrand2003spin}. Existing results for this model point to a complex energy landscape with exponentially many local minima and saddle points~\citep{auffinger2013random}. Empirically, this manifests in deep networks as sensitivity of the training procedure to initialization and learning rates~\citep{sutskever2013importance,singh2015layer} or even divergence for complex architectures~\citep{cai2016unified}. The energy landscape also has a layered structure, i.e., high-order saddle points with many negative eigenvalues of the Hessian lie at high energies while saddle points with few descending directions are located at low energies near the ground state. Local minima proliferate the energy landscape: there are exponentially many of them, at all energy levels.

An effective way of modifying this landscape is to perturb the Hamiltonian by a random magnetic field~\citep{fyodorov2013high} or another uncorrelated Hamiltonian~\citep{talagrand2010mean}. In this paper, we employ the former; this is an additive term in the gradient of the loss function. In fact, there exists a critical threshold of the perturbation below which the exponential regime persists and above which the landscape trivializes to leave only one local minimum. This phase transition is not sharp, i.e., that there exists a small band around the perturbation threshold that results in polynomially many local minima. Using a control parameter, we can thus smoothly transition between the original complex landscape and a completely degenerate loss function. Moreover, we prove that such an annealing does not change the locations of the local minima of the original problem.

SGD can readily benefit from this phenomenon, indeed we can compute the exact magnitude of the perturbation to start training in the relaxed polynomial phase and sequentially reduce the perturbation to tighten towards original landscape. This leads SGD to train faster in the beginning thanks to the more benign error landscape. This is prominently seen in our experiments on fully-connected and convolutional deep networks; AnnealSGD alleviates the vanishing gradient problem for the former and results in a better generalization error on the latter.

\section{Related work}
\label{s:related_work}

Gradient noise is a very effective technique in non-convex optimization e.g., SGD can escape from strict saddle points~\citep{ge2015escaping}, and indeed, stochasticity of gradients is crucial to training complex neural networks. However, the mini-batch gradient often does not have high variance in all directions which necessitates additional gradient noise~\citep{jim1996analysis,nalisnick2015scale} and makes SGD highly dependent on the dimensionality~\citep{lee2016gradient}. Our proposed algorithm, AnnealSGD, instead fixes a perturbation in the beginning and reduces its strength as training progresses. We also analyze the effect of such a perturbation by interpreting it as an external magnetic field in a spin glass model. In fact, our connection to spin glasses shows that gradient noise is detrimental because it forces the weights to align in different uncorrelated directions at successive iterations (cf. Figs.~\ref{fig:cifarconv_align}) leading to higher error rates (cf. Table~\ref{tab:cifar10}). In our experience, gradient noise also tends to be more difficult to tune.

Our work is closely related to~\citet{choromanska2014loss} which discusses the exponential regime for a spin glass model of dense, deep networks under assumptions where paths in the network connected to different input neurons are independent. We instead employ sparsity --- indeed, almost 95\% weights in a typical deep network are near-zero~\citep{denil2013predicting, chen2015compressing} --- and show that dependent paths do exist but their contribution to the loss function is small. Our random, sparse model for deep networks in Sec.~\ref{ss:model_deep_networks} is motivated from~\citet{arora2013provable} and leverages upon its results. These assumptions enable a systematic inquiry into the energy landscape of deep networks are not any more restrictive than other approaches in literature.

A follow-up work to~\citet{choromanska2014loss} by~\citet{sagun2014explorations} shows that SGD fails to progress beyond a very specific energy barrier, one that can be seen in spin glass computations as the onset of low-order saddle points and local minima. Topology trivialization introduced in this paper demonstrates a way to make such an energy landscape more amenable to first order algorithms.

Conceptually, our approach is similar to homotopy continuation~\citep{allgower2012numerical,mobahi2016training} and sequentially tightening convex relaxations~\citep{jaakkola2010learning,wang2014tighten}. Our theoretical development in Sec.~\ref{s:perturbations_of_hamiltonian} is however specialized to spin glass Hamiltonians, thereby enabling an analysis and a direct connection to deep networks.

\section{Preliminaries}
\label{s:preliminaries}

\subsection{A model for deep networks}
\label{ss:model_deep_networks}

Let us consider a deep neural network with $p$ hidden layers and $n$ neurons on each layer. We denote the observed layer at the bottom by $X \in \zoo^n$ which is generated by a binary feature vector $\xi \in \zoo^n$. More precisely,
\beq{
    \label{eq:h_to_X}
    X = g \rbrac{{\W{1}}^\top\ g \rbrac{ {\W{2}}^\top \ldots\ g \rbrac{ {\W{p}}^\top\ \xi} \ldots }};
}
where $\W{k} \in [-N,\ N]^{n \times n}$ for $k \leq p$ and some constant $N$ are the weight matrices and $g(x) = \ind_{\cbrac{x \geq 0}}$ are threshold nonlinearities. $\ind_A$ denotes the indicator function of the set $A$.
To make this model analytically tractable, we make the following assumptions:
\begin{enumerate}[(i)]
    \item $\xi$ has $\rho$ non-zero entries and $\r \rbrac{\f{d}{2}}^p$ is a constant;
    \item for $d = n^{1/p}$, every entry $\W{k}_{ij}$ is an iid zero-mean random variable with $\P (\W{k}_{ij} > 0) = \P (\W{k}_{ij} < 0) = d/(2n)$ and zero otherwise. This results in an average $d$ non-zero weights for every neuron;
    \item distribution of $\W{k}_{ij}$ is supported on a finite set of size $\Th(n)$ is not all concentrated near zero.
\end{enumerate}
If every entry of $\xi_i$ is an indicator of some class, the first assumption implies that at most $\r$ classes are present in every data sample $X_i$. The Chernoff bound then implies that with high probability, each $X$ has $\r \rbrac{\f{d}{2}}^p$ fraction non-zero entries, which we assume to be constant. The third assumption on the support of the weight distribution ensures that the weights of the deep network are well-conditioned.

The above model enjoys a number of useful properties by virtue of it being random and sparsely connected. For high sparsity ($d < n^{1/5}$ or equivalently, $p > 5$), the network exhibits ``weight-tying'', a technique popular in training denoising auto-encoders, whereby the hidden layer can be computed (w.h.p) by running the network in reverse:
\beq{
    \xi = g \rbrac{\W{p} g \rbrac{ \W{p-1} \ldots\ g \rbrac{\W{1} X - \f{d}{3} \ones_n} - \f{d}{3} \ones_n} \ldots } \notag;
    \label{eq:denoising_autoencoder}
}
where $\ones_n$ is a vector of ones~\citep{arora2013provable}. We exploit this property in Thm.~\ref{thm:deep_net_H} to connect the loss of such a network to the spin glass Hamiltonian.

\textbf{Classification model:} In a typical binary classification setting, one uses a support vector machine (SVM) or a soft-max layer on the feature vector $\xi$ to obtain an output $Y \in \zoo$. We model this as
\beq{
    Y = g \rbrac{\W{p+1} \xi - \f{d}{3} \ones_n}.
    \label{eq:Y}
}

\subsection{Deep networks as spin glasses}
\label{ss:deep_nets_spin_glasses}

A $p$-spin glass for an integer $p \geq 1$ with $n$ spins is given by an energy function, also known as the ``Hamiltonian'',
\beq{
    -\Hnp(\s) = \f{1}{n^{(p-1)/2}} \sum_{\ip =1}^n \Jip \sip;
    \label{eq:p_spin_glass}
}
where $\s = (\s_1, \ldots, \s_n) \in \reals^n$ is a configuration of the ``spins'' and the ``disorder'' $\Jip$ are iid standard Gaussian random variables. Note that $\Hnp(\s)$ is zero-mean and Gaussian for every $\s$. The term $n^{(p-1)/2}$ is a normalizing constant that ensures that the Hamiltonian scales as $\Th(n)$. In spin glass computations, for ease of analysis one typically assumes a spherical constraint on the spins, viz., $\sum_i \s_i^2 = n$ or in other words, $\s \in S^{n-1}(\sqrt{n})$ which is the $(n-1)$-dimensional hypersphere of radius $\sqrt{n}$.

The following theorem connects the model for deep networks in Sec.~\ref{ss:model_deep_networks} to the spin glass Hamiltonian.
\begin{theorem}
\label{thm:deep_net_H}
If the true label $Y^t \sim \mathrm{Ber}(q)$ for some $q < 1$, the zero-one loss $\E_X |\hY - Y^t|$ has the same distribution as
\beq{
    \label{eq:lem_deep_net_H}
    -H_{n,p}(\s) = \f{J}{n^{(p-1)/2}}\ \sum_{\ip = 1}^n\ \Jip \ \sip
}
up to an additive constant. $\Jip$ is a zero-mean, standard Gaussian random variable, $J \in \reals$ is a constant and $\s \in S^{n-1}(\sqrt{n})$.
\end{theorem}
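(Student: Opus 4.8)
The plan is to expand the network prediction $\hY$ as a sum over paths through the $p$ layers and then identify the resulting degree-$p$ polynomial in the weights with the spin glass Hamiltonian. First I would invoke the weight-tying property of Sec.~\ref{ss:model_deep_networks} to write the recovered feature $\xi$, and hence $\hY$, as a composition of the threshold maps $g$ across all $p$ layers. Unfolding this composition, each monomial is indexed by a path $(\ip)$ that selects one neuron per layer and contributes the product of the weights along that path, multiplied by an activation indicator and a factor of the input $X$. Identifying the per-layer weight contributions with a single configuration $\s$ (as in the reduction of \citet{choromanska2014loss}) turns $\hY$ into a polynomial $\sum_{\ip} A_{\ip}\, \sip$ of degree $p$, whose coefficients $A_{\ip}$ carry the randomness of the activations and the data.

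Next I would reduce the zero-one loss to this polynomial. Since $\hY, Y^t \in \zoo$, we have $|\hY - Y^t| = \hY + Y^t - 2\hY Y^t$, so with $Y^t \sim \Berdist(q)$ independent of $X$ the loss $\E_X |\hY - Y^t|$ is an affine function of $\E_X[\hY]$; the parts depending only on $q$ peel off as the stated additive constant, leaving a centered version of the path-sum. I would then compute the first two moments of the centered coefficients: their mean vanishes after subtracting the constant, and under assumption (ii) the variance of each monomial is governed by the probability $\rbrac{d/(2n)}^p$ that all edges of a path are present, which together with the $\r \rbrac{d/2}^p$ sparsity of the active input fixes the scaling.

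The heart of the argument is to show that, once normalized, the collection $\cbrac{\Jip}$ behaves like iid standard Gaussians and that the loss, viewed as a random field over $\s$, has the same finite-dimensional distributions as $\Hnp$. I would establish this by matching covariances: for two configurations $\s, \s'$ the covariance of the centered path-sums should depend only on the overlap $\ag{\s, \s'}$, reproducing $\f{1}{n^{p-1}} \ag{\s, \s'}^p$, which is exactly the covariance of $\Hnp$ on $\Sn$. The factor $n^{(p-1)/2}$ together with the constant $\r\rbrac{d/2}^p$ is what makes the Hamiltonian scale as $\Th(n)$, and the remaining multiplicative factors are absorbed into $J$; the spherical constraint $\norm{\s}^2 = n$ comes from normalizing the bounded, well-conditioned weights of assumption (iii). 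Joint asymptotic normality then follows from a central limit theorem applied to the sum over the many weakly dependent paths.

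The main obstacle, and the point where this analysis departs from the independent-path model of \citet{choromanska2014loss}, is controlling the paths that share neurons or edges: these induce dependence among the $\Jip$ and a priori spoil both the iid-Gaussian conclusion and the clean overlap-only covariance. The crucial step is therefore to show, using the sparsity of assumption (ii) — each neuron carries only $\OO(d)$ nonzero weights with $d = n^{1/p}$ — that the dependent paths form a vanishing fraction of the total, so their aggregate contribution to the variance and to the higher cumulants is of lower order. Once the dependent paths are shown to be negligible, the CLT applies to the dominant independent-path contribution and the identification with the $p$-spin glass Hamiltonian follows.
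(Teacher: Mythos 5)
Your overall skeleton (path expansion of $\hY$, affine reduction of the zero-one loss via $\E\abs{\hY - Y^t} = q + (1-2q)\hY$, Gaussian approximation of the path coefficients) matches the paper's, and your loss-reduction step is essentially identical to the paper's final paragraph. But there is a genuine gap at the central step: how the degree-$p$ polynomial in the \emph{edge} weights $W_{i,i_1}W_{i_1,i_2}\cdots W_{i_p,Y}$ becomes a polynomial $\sum_{\ip}\Jip\,\sip$ in a single $n$-dimensional configuration $\s$. You propose to do this ``as in the reduction of Choromanska et al.,'' i.e., by identifying per-layer weight contributions with coordinates of $\s$; but that reduction rests precisely on the path-independence assumptions that this paper is built to avoid, and without them the path-products are indexed by $\OO(n^p)$ distinct edge tuples, not by $p$-tuples drawn from $n$ spins. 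The paper's actual mechanism is assumption (iii): each $\W{k}_{ij}$ is supported on a finite set of size $m=\Th(n)$, so $\Wip$ is rewritten as $\sum_{l_1,\ldots,l_p}\ind\cbrac{W_{i_1,i_2}=\s_{l_1},\ldots}\prod_k\s_{l_k}$, the sum over paths is absorbed into a new disorder $\Jlp$ (a sum of the per-path Gaussians weighted by these indicators), and the ``spins'' are the \emph{support points} of the weight distribution rather than the weights along a path. You invoke assumption (iii) only to justify the spherical constraint, which is not its role (the sphere is imposed as a standard spin-glass approximation, stated as such in the paper); without the quantization step your identification of the loss with $\Hnp$ does not go through, and the overlap covariance $n^{-(p-1)}\ag{\s,\s'}^p$ you aim to match cannot even be written down.

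A secondary issue: your plan to handle dependent paths by showing they form a vanishing fraction and then applying a CLT to the remainder is a plausible alternative but is asserted rather than carried out. The paper instead controls the two sources of dependence separately --- the input-induced coupling $Z=\sum_i X_i\,\g_{i,i_1}W_{i,i_1}$ is bounded via Bernstein's inequality (this is where the requirement $p>5$ enters, which your sketch never uses), and the correlation of activations across layers is handled by the concentration of the activation probability $\P(\gip=1)=\Th(n^{-(p-1)/2})$ borrowed from Arora et al. --- after which $Z\,\gip$ is simply replaced by a Gaussian of matched mean and variance rather than justified by a CLT over paths. If you pursue your covariance-matching route, you would still need the quantization above first, and you would need to make the ``vanishing fraction of dependent paths'' claim quantitative in terms of $d=n^{1/p}$.
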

\begin{proof}(sketch, cf. Appendix~\ref{s:proofs})
For each input neuron $X_i$, if $\Gamma_i$ is the set of ``active paths'', i.e., the ones with non-zero activations at each layer, we can write the output $Y$ as
$$
    Y = \sum_{i=1}^n\ \sum_{\g \in \Gamma_i}\ X_i\ W_\g;
$$
where $W_\g$ is the product of the weights along this path. The activation probability of a path $\g$ (consisting of the $i^{\trm{th}}$ input neuron, $i_1^{\trm{th}}$ neuron on the first hidden layer and so on) concentrates for a sparse network. Note that activations are not independent due to layers above. We can use this along with an unwrapping of the first layer to bound the above summation to get the Gaussian coupling term $\Jip$ in~\eqref{eq:lem_deep_net_H}.

We next use the assumption that weights $W^k_{ij}$ are supported on a finite set: the term $W_\g$ can take at most $\OO(n^p)$ distinct values. Denoting a ``spin'' as $\s \in \trm{supp}(W^k_{ij})$ (which is assumed of size $\Th(n)$), we can write $W_\g$ as a product of $p$ terms $\s_{i_1}, \ldots, \s_{i_p}$ and make an approximation that $\s$ lies on the sphere $S^{n-1}(\sqrt{n})$.
\end{proof}

Let us emphasize that the ``spins'' resulting from Thm.~\ref{thm:deep_net_H} are the quantization of weights of a deep network on a set $[-N, N]$ (we can be safety set $N =\sqrt{n}$). This quantization is currently necessary to map the $\OO(n^p)$ distinct paths $W_\g$ into the set of $p$-products of $n$ spins $(\s_1, \ldots, \s_n)$. We also make the standard spin glass assumption that $\s \in S^{n-1}(\sqrt{n})$.

For the sake of building intuition in the sequel, one can simply think of $\s$ as the real-valued weights themselves.

\section{Perturbations of the Hamiltonian}
\label{s:perturbations_of_hamiltonian}

In Sec.~\ref{ss:scaling_critical_points}, we discuss the asymptotic scaling of critical points, i.e., all local minima and saddle points. We then show how the energy landscape changes under an external magnetic field in Sec.~\ref{ss:modifying_landscape} and Sec.~\ref{ss:annealing}. The next section Sec.~\ref{ss:quality_local_minima} describes how this perturbation affects the locations of the local minima.

\subsection{Asymptotic scaling of critical points}
\label{ss:scaling_critical_points}

For any $u \in \reals$, let the total number of critical points in the set $(-\infty, u]$ be
$$
\crt(u) = \abs{\cbrac{\s:\ \nabla H(\s) = 0,\ H(\s) \leq n\ u}}.
$$
We will denote $\crt(\infty)$ by $\crt(H)$. The index of a critical point $\s \in \crt(H)$ is the number of negative eigenvalues of the Hessian of $H_{n,p}(\s)$ and we denote critical points of index $k$ by $\crt_k(u)$ (e.g., local minima are simply $\crt_0(u)$). Using statistics of the eigen-spectrum of the Gaussian Orthogonal Ensemble (GOE), one can show:
\aeq{
    \label{eq:thetak}
    \lim_{n \to \infty}\ \f{1}{n}\ \log \E\ \crt_k(u) &= \Th_k(u),\\
    \label{eq:theta_all}
    \lim_{n \to \infty}\ \f{1}{n} \log \E\ \crt(u) &= \Th(u);
}
for any $k \geq 0$ and $p \geq 2$~\citep{auffinger2013random}. The functions $\Th_k(u)$ and $\Th(u)$ are known as the ``complexity'' of a spin glass. Fig.~\ref{fig:energy_barriers} elaborates upon these quantities.
\begin{figure}[!tbh]
\centering
\includegraphics[width=0.6 \columnwidth]{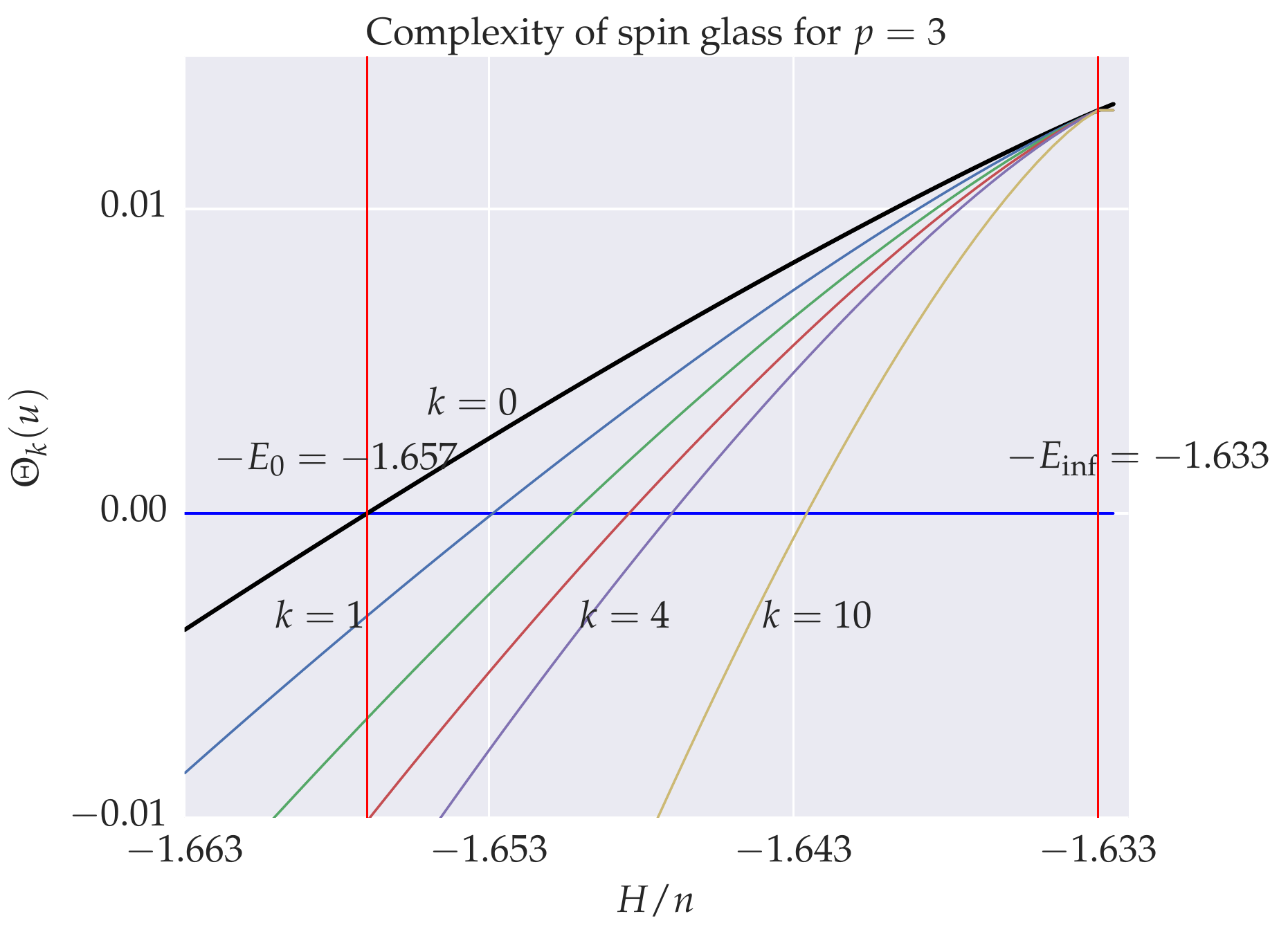}
\caption{The number of critical points in $(-\infty, u]$ scales as $e^{n \Th_k(u)}$. The black curve denotes $\Th_0(u)$, the complexity of local minima. Below $\lim_{n\to \infty}\ \inf H(\s)/n = -E_0$ which is where $\Th_0(u)$ intersects the $x$-axis, there are no local minima in the logarithmic scaling~\citep{auffinger2013random}. Similarly, below the point where $\Th_k(u)$ intersects the $x$-axis, there are no critical points of index higher than $k$. The Hamiltonian thus shows a layered landscape, higher-order saddle points to the right and in fact, local minima concentrated near $-E_0$ on the far left~\citep{subag2015extremal}.}
\label{fig:energy_barriers}
\end{figure}

\subsection{Modifying the energy landscape}
\label{ss:modifying_landscape}

Let now consider a perturbation to the Hamiltonian in~\eqref{eq:lem_deep_net_H}
\beq{
    -\tH(\s) = -H(\s) + h^\top \s;
    \label{eq:tH_general}
}
where $h = (h_1, \ldots, h_n) \in \reals^n$ is the external perturbation. We let $h_i$ be iid zero-mean Gaussian random variables with
$$
    h_i \sim\ \trm{N}(0,\ \nu^2)\quad \trm{for\ all}\ i \leq n.
$$
We now define a parameter
\beq{
    B = \f{J^2 p(p-2) - \nu^2}{J^2 p^2 + \nu^2} \in (-1,\ 1 - 2/p].
    \label{eq:B_tH}
}
that governs the transition between the exponential, polynomial and trivialized regimes. The value $B=0$, i.e., $\nu_c := J \sqrt{p(p-2)}$ demarcates the exponential regime from total trivialization.
\begin{theorem}[\citet{fyodorov2013high}]
\label{thm:transition_critical_dot}
For large $n$, the expected number of critical points of the perturbed Hamiltonian is given by
\beq{
    \E\ \mathrm{crt}(\tH) = \begin{cases}
    2 & \trm{if}\ B = -\Omega(n^{-1}),\\
    \f{2n}{\sqrt{\pi}} \t^{-3/2} & \trm{if}\ B = -\f{\t}{n},\\
    4 n^{1/2}\ \sqrt{\f{1+B}{\pi B} } \exp \rbrac{ \f{n}{2} \log \f{1+B}{1-B} } & \trm{if}\ B > 0.
    \end{cases}
    \label{eq:transition_critical_dot}
}
where $\tau \ll n$ is a constant.
\end{theorem}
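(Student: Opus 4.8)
The plan is to compute $\E\,\crt(\tH)$ via the Kac--Rice formula for the expected number of zeros of the (Riemannian) gradient of $\tH$ on the sphere $\Sn$. Writing the critical-point count as
\[
\E\,\crt(\tH) = \int_{\Sn} \E\Big[\,\big|\det \nabla^2 \tH(\s)\big| \ \Big|\ \nabla \tH(\s)=0\,\Big]\, p_{\nabla\tH(\s)}(0)\, d\s,
\]
I would first exploit symmetry: since the law of $-H$ in \eqref{eq:tH_general} is isotropic on the sphere and the magnetic field $h$ is i.i.d.\ centered Gaussian, the integrand does not depend on $\s$. Hence the integral collapses to $\omega_n \cdot \rho$, where $\omega_n$ is the surface measure of $\Sn$ and $\rho$ is the Kac--Rice density evaluated at a single reference point (say the north pole). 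This removes the geometry and leaves a single-point Gaussian computation.

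Second, I would determine the joint law of the gradient and the Hessian of $\tH$ in the tangent space at the reference point. The field value, the gradient, and the Hessian are jointly Gaussian; the linear term $h^\top\s$ contributes a shift to the gradient and, through the Lagrange multiplier enforcing the spherical constraint, a deterministic drift to the Hessian. Conditioning on $\nabla\tH=0$, the conditional Hessian is distributed as a scaled GOE matrix plus a multiple of the identity, where the identity shift combines the curvature induced by the field value $H(\s)$ with the radial bias coming from $\nu^2$. After integrating out the field value, $\rho$ reduces to $C_n\,\E_{\mathrm{GOE}}\big|\det(M - t\,I)\big|$ for an explicit threshold $t$ and normalization $C_n$, and one checks that $t$ crosses the edge of the semicircle $\mu_{\mathrm{sc}}$ exactly when the parameter $B$ of \eqref{eq:B_tH} changes sign; this is what ties the spectral transition to $B$.

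Third, I would extract the large-$n$ asymptotics of $\E_{\mathrm{GOE}}|\det(M-tI)|$ in the three windows. When $B<0$ the shift $t$ lies outside $[-2,2]$, so $M-tI$ is almost surely sign-definite, the absolute value can be dropped, and $\E\det(M-tI)$ is an exact Hermite polynomial whose leading asymptotics give the constant $2$ (the single global minimum and maximum of a trivialized landscape). In the critical window $B=-\t/n$ the threshold sits in the Airy edge-scaling regime, and the corresponding edge asymptotics of the Hermite polynomial yield the polynomial count $\tfrac{2n}{\sqrt\pi}\,\t^{-3/2}$. For $B>0$ the shift lies inside the bulk, $M-tI$ is indefinite, and $\tfrac1n\log\E|\det(M-tI)|\to \int \log|x-t|\,d\mu_{\mathrm{sc}}(x)$; evaluating this semicircle integral produces the exponential rate $\tfrac12\log\tfrac{1+B}{1-B}$, with the Gaussian normalization $C_n$ supplying the prefactor $4n^{1/2}\sqrt{(1+B)/(\pi B)}$.

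The hard part will be handling the absolute value $|\det(M-tI)|$ uniformly across the transition and matching constants. In the definite regime dropping $|\cdot|$ is immediate, but in the critical window one must control both the sign changes of the determinant as $t$ approaches the edge and the precise Airy-type prefactor, and in the bulk regime one needs the subleading ($n^{1/2}$) correction to the log-determinant large deviation in addition to the leading rate. Threading the exact constants $2/\sqrt\pi$, $\t^{-3/2}$, and $4\sqrt{(1+B)/(\pi B)}$ through the Kac--Rice Gaussian normalization $C_n$, the Hermite/Airy asymptotics, and the semicircle integral is the delicate bookkeeping on which the statement rests.
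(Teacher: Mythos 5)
The paper does not prove this theorem---it is imported verbatim from \citet{fyodorov2013high}---so there is no internal proof to compare against. Your Kac--Rice outline (isotropy collapsing the sphere integral to a single-point density, the conditional Hessian as a shifted GOE, and the Hermite/Airy/semicircle asymptotics of $\E\,|\det(M-tI)|$ in the three windows of $B$) is essentially the argument given in that reference, so the approach is sound; the one gloss is that after conditioning, the shift $t$ remains a Gaussian random variable to be integrated over rather than a deterministic threshold, and it is exactly Fyodorov's closed-form Hermite-polynomial identity for this Gaussian-averaged absolute determinant that makes the non-asymptotic constants $2$, $\tfrac{2n}{\sqrt{\pi}}\tau^{-3/2}$, and $4n^{1/2}\sqrt{(1+B)/(\pi B)}$ come out exactly.
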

For $\nu < \nu_c$, i.e., $B > 0$ in the third case, we have the (non-asymptotic version of) exponential regime in~\eqref{eq:thetak} and~\eqref{eq:theta_all}. The first case shows total trivialization because any smooth function on the sphere $S^{n-1}(\sqrt{n})$ has at least one minimum and at least one maximum, thus there is exactly one minimum if the perturbations are larger than $\nu_c = J \sqrt{p(p-2)}$. Indeed, such a trivialization occurs for all regularized loss functions when the regularization term overwhelms the data term. However, for most loss functions unlike spin glass Hamiltonians, we cannot compute how the regularization term affects the solution landscape; an exception to this is LASSO where the Least Angle Regression (LARS) algorithm exploits the piecewise-linear solution paths to efficiently compute the best regularization coefficient~\citep{efron2004least}.

More importantly for us, the phase transition between the exponential and trivial regime is not sharp --- otherwise there would be no way of modifying the complexity of the landscape without destroying the loss function. For a band of size $\OO(n^{-1})$ below zero, as shown in Thm.~\ref{thm:transition_critical_dot}, the spin glass has only $\OO(n)$ critical points. A more elaborate analysis on a finer scale, viz., using an order parameter $B = -\tau/n$ for a constant $\abs{\tau} \ll n$ gives the following theorem.

\begin{theorem}[\citet{fyodorov2013high}]
\label{thm:tau_annealing}
For $B = -\tau/n$, the expected number of critical points is given by
\beq{
    \lim_{n \to \infty}\ n^{-1}\ \E\ \crt(\tH) =
    \begin{cases}
        \f{2}{\sqrt{\pi}}\ \tau^{-3/2} & \trm{for}\ \tau \gg 1,\\
        \f{2}{\sqrt{\pi \abs{\tau}}}\ 2 e^{\abs{\tau}} & \trm{for}\ \tau \ll -1
    \end{cases}
    \label{eq:edge_scaling_tau}
}
\end{theorem}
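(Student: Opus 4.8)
The plan is to obtain Thm~\ref{thm:tau_annealing} as a fine-scale refinement of Thm~\ref{thm:transition_critical_dot}, reading off the two regimes $\tau \gg 1$ and $\tau \ll -1$ from the second and third branches of~\eqref{eq:transition_critical_dot} respectively, and then performing the appropriate $n \to \infty$ expansion under the scaling $B = -\tau/n$. Concretely, $\tau \gg 1$ corresponds to $B = -\tau/n < 0$, which is exactly the middle branch of~\eqref{eq:transition_critical_dot}, while $\tau \ll -1$ corresponds to $B = \abs{\tau}/n > 0$, which is the third (exponential) branch. The heavy analytic lifting---the Kac--Rice representation of $\E\,\crt(\tH)$ and the asymptotics of the associated GOE spectral determinant near its edge---is already encapsulated in Thm~\ref{thm:transition_critical_dot}, so what remains is a careful matched expansion rather than a fresh random-matrix computation.

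For $\tau \gg 1$, the middle branch of~\eqref{eq:transition_critical_dot} already gives $\E\,\crt(\tH) = \f{2n}{\sqrt{\pi}}\,\tau^{-3/2}$ under $B = -\tau/n$, so dividing by $n$ yields $n^{-1}\E\,\crt(\tH) \to \f{2}{\sqrt{\pi}}\,\tau^{-3/2}$ at once. The point of isolating the sub-regime $\tau \gg 1$ is to guarantee that this single term is the genuine leading order: as $\tau$ grows the count must interpolate down toward the two topologically forced extrema of the first branch ($\E\,\crt = 2$ for $B = -\Omega(n^{-1})$), and one must verify that the corrections bridging the middle and first branches are of lower order than $\tau^{-3/2}$ throughout $\tau \gg 1$. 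This is where uniformity of the underlying Kac--Rice asymptotics in the edge parameter $\tau$ is invoked.

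For $\tau \ll -1$, I would substitute $B = \abs{\tau}/n$ into the third branch and expand each factor as $n \to \infty$. The exponent obeys $\f{n}{2}\log\f{1+B}{1-B} = nB + \f{nB^3}{3} + \OO(nB^5) = \abs{\tau} + \OO\rbrac{\abs{\tau}^3/n^2} \to \abs{\tau}$, so that $\exp\rbrac{\f{n}{2}\log\f{1+B}{1-B}} \to e^{\abs{\tau}}$; the prefactor behaves as $4 n^{1/2}\sqrt{\f{1+B}{\pi B}} = 4 n^{1/2}\sqrt{\f{n(1+\abs{\tau}/n)}{\pi\abs{\tau}}} \sim \f{4n}{\sqrt{\pi\abs{\tau}}}$. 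Multiplying and dividing by $n$ gives $n^{-1}\E\,\crt(\tH) \to \f{4}{\sqrt{\pi\abs{\tau}}}\,e^{\abs{\tau}} = \f{2}{\sqrt{\pi\abs{\tau}}}\,2\,e^{\abs{\tau}}$, matching~\eqref{eq:edge_scaling_tau}.

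The main obstacle is not the algebra but justifying that these two limits exhaust the behavior on the $\tau$-scale. The expansion in the $\tau \ll -1$ case is legitimate precisely because the cubic correction to the exponent is $\OO\rbrac{\abs{\tau}^3/n^2}$, which vanishes for fixed $\tau$ but would spoil the limit were $\tau$ allowed to grow with $n$; symmetrically, the $\tau \gg 1$ case needs the middle branch to dominate uniformly. If one instead wanted a self-contained argument rather than leaning on Thm~\ref{thm:transition_critical_dot}, the real difficulty migrates to establishing the near-edge asymptotics of $\E\,\abs{\det\rbrac{\trm{GOE}_{n-1} - t\,I}}$ as the shift $t$ (fixed by the spherical Lagrange multiplier and the field strength $\nu$) approaches the semicircle edge at rate $\tau/n$. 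The crossover regime $\tau = \OO(1)$ interpolating between the power law $\tau^{-3/2}$ and the exponential $e^{\abs{\tau}}$ is governed by the Tracy--Widom edge and is deliberately excluded from the statement.
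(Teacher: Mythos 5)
Your derivation is correct and is essentially what the paper does: Theorem~\ref{thm:tau_annealing} is imported from \citet{fyodorov2013high} without proof, and the paper's only justification is the same consistency matching against the branches of Theorem~\ref{thm:transition_critical_dot} that you carry out (your algebra for both the $\tau \gg 1$ and $\tau \ll -1$ limits checks out, including the factor $4/\sqrt{\pi\abs{\tau}}\,e^{\abs{\tau}}$). You are also right, and appropriately candid, that the uniformity of the Kac--Rice edge asymptotics in $\tau$ is the one step not covered by this matching; the paper does not address it either, deferring entirely to the cited reference.
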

Note that the $\tau \gg 1$ case in~\eqref{eq:edge_scaling_tau} gives the polynomial regime in~\eqref{eq:transition_critical_dot} with $\OO(n)$ critical points while the second case recovers the $B > 0$ limit of the exponential regime in~\eqref{eq:transition_critical_dot}. Indeed, as Thm.~\ref{thm:tau_annealing} shows, the polynomial band $\abs{B} = \OO(n^{-1})$ extends on either size of $B = 0$, the exponential regime for $B = \Omega(n^{-1})$ lies to the right and the trivialized regime lies to the left of this band with $B = -\Omega(n^{-1})$.

\subsection{Scaling of local minima}
\label{ss:scaling_local_minima}

The expected number of local minima of the Hamiltonian also undergoes a similar phase transition upon adding an external perturbation with the same critical threshold, viz., $\nu_c = J \sqrt{p (p-2)}$. If $\nu < \nu_c$, the energy landscape is relatively unaffected and we have exponentially many local minima as predicted by~\eqref{eq:thetak} and exactly one minimum under total trivialization. Similar to Thm.~\ref{thm:transition_critical_dot}, there is also an ``edge-scaling'' regime for local minima that smoothly interpolates between the two.
\begin{theorem}[\citet{fyodorov2013high}]
\label{thm:kappa_annealing}
For $B = -\f{\kappa}{2} n^{-1/3}$ and some constant $C > 0$, the expected number of local minima is given by
\beq{
    \lim_{n \to \infty}\ \E\ \crt_0(\tH) =
    \begin{cases}
        1 & \trm{for}\ \kappa \gg 1,\\
        C \exp \rbrac{\f{\kappa^2}{24} + \f{4 \sqrt{2} \abs{\kappa}^{3/2}}{3}} & \trm{for}\ \kappa \ll -1.
    \end{cases}
    \label{eq:edge_scaling_kappa}
}
\end{theorem}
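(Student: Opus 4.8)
The plan is to compute $\E\,\crt_0(\tH)$ through the Kac--Rice formula and then extract its behaviour in the edge-scaling window $B=-\tfrac{\kappa}{2}n^{-1/3}$ by matching it to a random matrix computation. First I would write
\beqs{
\E\,\crt_0(\tH) = \int_{\Sn} \E\!\sqbrac{\,\abs{\det \nabla^2 \tH(\s)}\,\ind_{\nabla^2\tH(\s)\succ 0} \,\Big|\, \nabla\tH(\s)=0}\, p_{\nabla\tH(\s)}(0)\, d\s,
}
where $\nabla$ and $\nabla^2$ are the Riemannian gradient and Hessian on the sphere and the indicator restricts to index-$0$ critical points. Since $-\tH$ is an isotropic Gaussian field on $\Sn$, the integrand does not depend on $\s$, so the integral collapses to the surface area times the value at a single fixed point.

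Second, I would identify the conditional law of the Hessian. The pair $(\nabla\tH,\nabla^2\tH)$ is jointly Gaussian, and conditioning on $\nabla\tH=0$ leaves a Hessian that, after removing the radial Lagrange term induced by the spherical constraint, is distributed as $c\sqrt{n}\,(M-tI)$ with $M$ drawn from the GOE and $t$ a deterministic shift. The shift combines two effects: the energy level $H(\s)/n$ at the critical point, entering through the $-pH$ term in the spherical Hessian, and the field variance $\nu^2$, entering through $h^\top\s$. Crucially, the order parameter $B$ in~\eqref{eq:B_tH} is exactly the combination measuring the displacement of $t$ relative to the lower edge of the semicircle law.

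Third, I would reduce the count to a random matrix quantity. Carrying out the Gaussian integral over the energy level, the problem becomes the asymptotics of
\beqs{
\E_{\mathrm{GOE}}\sqbrac{\,\abs{\det(M-tI)}\,\ind_{M\succ tI}\,}
}
weighted by a Gaussian factor in the rescaled energy variable. The positive-definiteness indicator $\ind_{M\succ tI}$, i.e.\ $\lambda_{\min}(M)>t$, is precisely what distinguishes local minima from higher-index saddles and drives the transition: when $t$ lies strictly below the spectral edge the event has probability tending to one and only the global minimum survives, whereas when $t$ enters the bulk the exponential complexity of~\eqref{eq:thetak} reappears.

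Finally, setting $B=-\tfrac{\kappa}{2}n^{-1/3}$ places $t$ within $\OO(n^{-2/3})$ of the lower edge, the window governed by Tracy--Widom fluctuations of the extreme GOE eigenvalue, so the annealed large-deviation estimate must be replaced by the finer edge asymptotics. For $\kappa\gg1$ (so $B<0$) the shift sits below the edge, $M$ is positive definite with probability tending to one, and the count collapses to the single global minimum; for $\kappa\ll-1$ (so $B>0$) the shift enters the bulk and minima proliferate, and expanding the complexity of minima about its threshold --- where it vanishes with the universal $3/2$ power --- produces the factor $\exp\!\rbrac{\tfrac{4\sqrt2}{3}\abs{\kappa}^{3/2}}$, while the Gaussian saddle point in the rescaled energy variable contributes $\exp\!\rbrac{\kappa^2/24}$, yielding~\eqref{eq:edge_scaling_kappa}. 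The main obstacle is exactly this last matching: the classical complexity computation relies on Laplace/large-deviation asymptotics for the GOE determinant that are accurate in the bulk but degrade in the $n^{-1/3}$ window of $B$, so a rigorous argument demands sharp Tracy--Widom left-tail and edge-universality estimates together with a delicate saddle-point analysis of the coupled energy--field integral that reconciles the $n^{-1/3}$ scale of $B$ with the $n^{-2/3}$ scale of the spectral edge, all while keeping the spherical-constraint correction to the Hessian controlled throughout the crossover.
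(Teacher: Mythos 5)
The paper does not actually prove this statement: Theorem~\ref{thm:kappa_annealing} is imported verbatim from \citet{fyodorov2013high}, so there is no in-paper argument to compare yours against. Judged on its own terms, your outline correctly identifies the route the cited source takes --- Kac--Rice on the sphere, collapse of the integral by isotropy (valid here because $h$ is itself Gaussian, so $h^\top\s$ only adds a term $\nu^2\,\s^\top\s'$ to the covariance and $-\tH$ remains a rotation-invariant mixed $(1,p)$ field), reduction of the conditional Hessian to a shifted GOE, and the observation that $B=-\tfrac{\kappa}{2}n^{-1/3}$ places that shift in the Tracy--Widom window of the lower spectral edge. The signs are also right: $\kappa\gg1$ gives $B<0$ with $\abs{B}\gg n^{-1}$, hence trivialization, while $\kappa\ll-1$ gives $B>0$.

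Two gaps keep this from being a proof rather than a road map. First, the step that carries the entire content of the theorem --- the constants $\tfrac{1}{24}$ and $\tfrac{4\sqrt{2}}{3}$ --- is precisely the step you defer: you assert that the $\abs{\kappa}^{3/2}$ term comes from the vanishing of the complexity at threshold and the $\kappa^2$ term from a Gaussian saddle point, but you never set up, let alone evaluate, the coupled integral over the rescaled energy and the edge eigenvalue statistics whose asymptotics these are; as written the exponents are reverse-engineered from the answer. Second, in the $\kappa\gg1$ case the inference ``$M-tI\succ0$ with probability tending to one, hence the count collapses to the single global minimum'' is a non sequitur: positive definiteness of the conditional Hessian at a typical critical point says that most critical points are minima, not that there is only one of them. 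To conclude $\E\,\crt_0(\tH)\to1$ you must separately invoke the collapse of the \emph{total} critical-point count, i.e.\ the first case of Theorem~\ref{thm:transition_critical_dot}, which does apply here since $\abs{B}=\tfrac{\kappa}{2}n^{-1/3}=\Omega(n^{-1})$, but your argument as stated does not use it.
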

First, note that this suggests $\OO \rbrac{\exp (\log n)^2}$ (quasi-polynomial) local minima for $\kappa = \log n$. Also, the band around $B = 0$ for polynomially many local minima is wider; it is $\OO(n^{-1/3})$ as compared to $\OO(n^{-1})$ for critical points. Above the critical perturbation threshold, in particular, between $-n^{-1/3} \lesssim B \lesssim -n^{-1}$ this shows an interesting phenomenon wherein almost all critical points are simply local minima. This has been previously discussed and empirically observed in the deep learning literature~\citep{dauphin2014identifying}. A more precise statement is proved for spin glasses in~\citep{fyodorov2007replica}.

\subsection{Annealing scheme}
\label{ss:annealing}

If we set $B = -\tau/n$ in ~\eqref{eq:B_tH}, for $p \gg 1$ resulting in $\nu_c \approx Jp$, we have
\beq{
    \nu = \nu_c \rbrac{1 + \f{2 \tau}{n} }^{1/2}
    \label{eq:nu_polynomial}
}
for the perturbation to be such that our random, sparse deep network has polynomially many critical points. For $0 < \tau \ll n$, we have $\nu > \nu_c$ but the spin glass is still within the polynomial band whereas large values of $\tau$ result in total trivialization. For large negative values of $\tau$, the external magnetic field is almost zero and we have exponentially many local minima as given by the second case in Thm.~\ref{thm:tau_annealing}.

We can now exploit this phenomenon to construct an annealing scheme for $\tau$. For the $i^{\trm{th}}$ iteration of SGD, we set
\beq{
    \tau(i) = n \rbrac{e^{-i/\tau_0} - \f{1}{2}}.
    \label{eq:tau_exponential_annealing}
}
In practice, we treat $\nu_c$ and $\tau_0$ as tunable hyper-parameters. Considerations of different values of $\tau(0)$, $\lim_{k \to \infty} \tau(k)$ and speed of annealing can result in a variety of annealing schemes; we have used exponential and linear decay to similar experimental results. Algorithm~\ref{alg:annealsgd} provides a sketch of the implementation for a general deep network. Note that Line 10 can be adapted to the underlying implementation of SGD, in our experiments, we use SGD with Nesterov's momentum and ADAM~\citep{kingma2014adam}.

\begin{center}
\begin{minipage}{0.7 \textwidth}
\IncMargin{0.04in}
\begin{algorithm}[H]
    \small
    Input:\hspace{0.35in} $\trm{weights\ of\ a\ deep\ network}\ w$\;
    \vspace{0.01in}
    Estimate:\hspace{0.17in} $\trm{\#layers}\ p$, $\trm{\#neurons}\ n$\;
    \vspace{0.01in}
    Parameters:\hspace{0.05in} $\trm{constant}\ \tau_0$, $\trm{coupling}\ J$, $\trm{learning\ rate}\ \eta$\;
    \vspace{0.02in}
    $h \sim J\sqrt{p(p-2)}\ \trm{N}(0,\ I)$\;
    \vspace{0.01in}
    $i = 0$\;
    \vspace{0.01in}
    \While{$\trm{True}$}
    {
        $X_i \la \trm{sample\ batch}$\;
        $\tau_i \la n \rbrac{e^{-i/\tau_0} - \f{1}{2}}$\;
        \vspace{0.01in}
        $h_{\trm{curr}} \la \rbrac{1 + \f{2 \tau_i}{n}}^{1/2}\ h$\;
        \vspace{0.01in}
        $w\ \la w -\eta \rbrac{\nabla \trm{loss}(w,\ X_i) + h_{\trm{curr}}}$\;
        \vspace{0.01in}
        $i \la i + 1$
    }
    \caption{AnnealSGD}
    \label{alg:annealsgd}
\end{algorithm}
\DecMargin{0.04in}
\end{minipage}
\end{center}

\subsubsection{Annealing is tailored to critical points}
Recent analyses suggest that even non-stochastic gradient descent always converges to local minima instead of saddle points~\citep{lee2016gradient,choromanska2014loss}. It seems contradictory then that we have used the scaling for critical points derived from Thm.~\ref{thm:tau_annealing} instead of the corresponding local minima version, viz., Thm.~\ref{thm:kappa_annealing} which would give
\beq{
    \nu = \nu_c \rbrac{1 + \f{\kappa\ \log n}{n^{1/3}} }^{1/2};
    \label{eq:nu_polynomial_kappa}
}
surely if SGD never converges on saddle points, reducing the complexity of local minima is more fruitful. We are motivated by two considerations to choose the former. Firstly, in our experiments on convolutional neural networks~\eqref{eq:nu_polynomial_kappa} does not perform as well as~\eqref{eq:nu_polynomial}, probably due to the asymptotic nature of our analysis. Secondly, even if SGD never converges to saddle points, training time is marred by the time that it spends in their vicinity. For instance, Kramers law argument for Langevin dynamics predicts that the time spent is inversely proportional to the smallest negative eigenvalue of the Hessian at the saddle point (see~\citet{bovier2006metastability} for a precise version).

\subsection{Quality of perturbed local minima}
\label{ss:quality_local_minima}

Sec.~\ref{ss:modifying_landscape} describes how the perturbation $h$ changes the number of local minima and critical points. It is however important to characterize the quality of the modified minima, indeed it could be that even if there are lesser number of local minima the resulting energy landscape is vastly different. In this section, we show that the answer to this question is negative --- the locations of the local minima in the polynomial regime are only slightly different than the their original locations and this difference vanishes in the limit $n \to \infty$.

We construct a quadratic approximation to the unperturbed Hamiltonian at a critical point and analyze the gradient $\nabla H(\s)$ and $\nabla^2 H(\s)$ separately. We can show that, for a spherical spin glass, the gradient at a critical point is zero-mean Gaussian with variance $n p$ while the Hessian is simply a scaled GOE matrix (cf. Lem.~\ref{lem:hessian_rmt} in Appendix~\ref{s:app:quality_perturbed_local_minima}). This helps us bound the location and the energy of the perturbed minimum as shown in the following theorem; a more formal version is Thm.~\ref{thm:bound_Y_s_delta_s} in Appendix~\ref{s:app:quality_perturbed_local_minima}.

\begin{theorem}
For large $n$ and a constant $\nu$, every local minimum $\s \in \crt_0(H)$ has another local minimum $\widetilde{\s} \in \crt_0(\tH)$ within a distance $\OO(n^{-1/3})$. Moreover, the normalized Hamiltonian differs by at most $2 \nu$ at these locations, i.e.,
\aeqs{
    \norm{\s - \widetilde{\s}}_2 &= \OO(n^{-1/3})\\
    \f{1}{n}\ \abs{H(\s) - \tH(\tilde{s})} &\leq 2 \nu.
}
\end{theorem}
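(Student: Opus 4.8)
The plan is to treat $\tH$ as a small deterministic perturbation of $H$ and to locate the perturbed minimum by an implicit-function / single-Newton-step argument anchored at the unperturbed critical point $\s$. Because the perturbing term $h^\top \s$ is linear, the Riemannian stationarity condition for $\tH$ reads $\trm{grad}\,\tH(\widetilde{\s}) = 0$, i.e. $\trm{grad}\,H(\widetilde{\s}) = P_{\widetilde{\s}}\, h$, where $P_\s = I - \s\s^\top/n$ projects onto the tangent space of $\Sn$ at $\s$. Writing $\widetilde{\s} = \s + \delta$, expanding to first order, and using that $\s \in \crt_0(H)$ forces $\trm{grad}\,H(\s) = 0$, I obtain on the tangent space the linear system $\nabla^2 H(\s)\,\delta = P_\s h + \OO(\norm{\delta}^2)$, hence
\beqs{
    \delta = \sqbrac{\nabla^2 H(\s)}^{-1} P_\s h + \OO(\norm{\delta}^2).
}
The distance bound thus reduces to controlling the action of the inverse Hessian on the random vector $P_\s h$.

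First I would invoke Lem.~\ref{lem:hessian_rmt}: at a critical point the tangential gradient of $H$ is a centered Gaussian with variance of order $np$, while $\nabla^2 H(\s)$ restricted to the tangent space is a scaled GOE matrix shifted by a multiple of the identity fixed by the energy level. The key structural fact is that $h$ is sampled independently of the disorder $\Jip$ defining $\nabla^2 H$; so if $\cbrac{v_k, \lambda_k}$ diagonalizes the Hessian, then the coefficients $\ag{v_k, h}$ are iid $\trm{N}(0, \nu^2)$. Expanding $\norm{\delta}^2 = \sum_k \ag{v_k, h}^2 / \lambda_k^2$ and taking expectations reduces the estimate to the spectral sum $\nu^2 \sum_k \lambda_k^{-2}$, which is dominated by the smallest eigenvalues of the GOE block. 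The $\OO(n^{-1/3})$ rate is expected to emerge from the edge (Tracy--Widom) scaling of these near-edge eigenvalues at the threshold energy where local minima concentrate.

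Before concluding I would check that $\widetilde{\s}$ is genuinely a minimum rather than an arbitrary critical point: the Riemannian Hessian of the linear term $h^\top\s$ equals $-(h^\top\s/n)\,P_\s$, and since $h^\top\s = \OO(\nu \sqrt{n})$ this is an $\OO(\nu/\sqrt{n})$ multiple of $P_\s$. Hence $\nabla^2 \tH(\widetilde{\s})$ differs from $\nabla^2 H(\s)$ only by a vanishing shift, so the index is preserved and $\widetilde{\s} \in \crt_0(\tH)$. For the energy claim I would split $H(\s) - \tH(\widetilde{\s}) = \sqbrac{H(\s) - H(\widetilde{\s})} + h^\top \widetilde{\s}$; the bracketed term is $\OO(\norm{\delta}^2)$ because $\s$ is critical, while Cauchy--Schwarz together with the concentration $\norm{h} \approx \nu\sqrt{n}$ gives $\f{1}{n}\abs{h^\top\widetilde{\s}} \leq \f{1}{n}\norm{h}\,\norm{\widetilde{\s}} \leq \nu + o(1)$, yielding the stated bound of $2\nu$ after absorbing lower-order terms.

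I expect the main obstacle to be the spectral sum $\sum_k \lambda_k^{-2}$. Because typical local minima sit at the threshold energy where the Hessian spectrum has a \emph{soft edge} at the origin, the inverse Hessian is ill-conditioned and the naive linearization is controlled by the problematic near-edge directions rather than the bulk; obtaining the clean $\OO(n^{-1/3})$ rate therefore requires a high-probability edge estimate for the scaled GOE (essentially rigidity of the smallest eigenvalue), restriction to non-degenerate minima so that $\sqbrac{\nabla^2 H}^{-1}$ is well defined, and careful control of the neglected $\OO(\norm{\delta}^2)$ remainder in exactly those soft directions. This is the part I would expect the formal Kac--Rice computation of Appendix~\ref{s:app:quality_perturbed_local_minima} to carry out in detail.
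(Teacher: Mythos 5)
Your skeleton is the same as the paper's: a single Newton step anchored at the unperturbed minimum, giving $\delta = [\nabla^2 H(\s)]^{-1}P_\s h$ up to remainder (the paper's $Y_\s = -\f{1}{\sqrt{n}}(\nabla^2 f_\s(0))^{-1}\nabla\lf_\s(0)$ in \eqref{eq:Y_s}, obtained via a rotation to the north pole rather than a tangent-space projection), the use of Lem.~\ref{lem:hessian_rmt} together with the independence of $h$ from the disorder to expand $h$ in the eigenbasis of the GOE Hessian (exactly the proof of Lem.~\ref{lem:g3_lemma18}), and a Cauchy--Schwarz/concentration argument showing the energy shift is dominated by the $\f{1}{n}\abs{h^\top\s}\leq\nu+o(1)$ term with $\Delta_\s$ of lower order. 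Your index-preservation check via the $-(h^\top\s/n)P_\s$ curvature term is a sensible addition the paper leaves implicit.

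The genuine divergence is in how the $\OO(n^{-1/3})$ distance is obtained, and here your anticipated route rests on a false premise for this model. You write that typical local minima sit at an energy where the Hessian spectrum has a soft edge at the origin, so that $\sum_k\lambda_k^{-2}$ is dominated by near-zero Tracy--Widom eigenvalues. For the spherical $p$-spin model with $p\geq 2$ this is not the case: local minima concentrate near the ground-state energy $-E_0$, strictly below the threshold $-E_\infty$, and by Lem.~\ref{lem:hessian_rmt} the Hessian there is $\sqrt{2(n-1)p(p-1)}\,M - puI$ with $u\approx -E_0$, whose spectrum lies in $\sqrt{n}\,\mathcal{V}(\d)$, i.e.\ is bounded \emph{below} by a constant times $\sqrt{n}$. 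No edge rigidity is needed: the paper simply conditions on $\nabla^2 f_\s(0)\in\sqrt{n}\,\mathcal{V}(\d)$, so $\norm{(\nabla^2 f_\s(0))^{-1}}=\OO(n^{-1/2})$, and combines this with $\norm{h}\approx\nu\sqrt{n}$ (a chi-tail bound) to get $\norm{Y_\s}\leq\f{c}{n}\norm{h}=\OO(n^{-1/2})$, of which the stated $n^{-\a}$ with $\a\in(1/4,1/3)$ is a weakening. Your concern does expose a real limitation of the theorem as phrased --- ``every local minimum'' should be read as ``every minimum at energies near $m_n$,'' since minima near $-nE_\infty$, where the gap closes, are excluded by the conditioning in Lems.~\ref{lem:g3_lemma18} and~\ref{eq:g4_lemma17} --- but if you pursue the Tracy--Widom computation at the soft edge you will be solving a harder problem than the one the paper actually addresses, and you will not recover the clean rate by that route.
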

In other words, the polynomial regime does not cause the local minima of the Hamiltonian to shift. Roughly, it only smears out small clusters of nearby local minima.

\section{Experiments}
\label{s:experiments}

In this section, we first empirically compute the energy landscape of a spin glass Hamiltonian and show how it changes with different levels of external perturbation. This gives a novel insight into structure of saddle points in the high-dimensional parameter space of a deep network. We then demonstrate our annealing procedure described in Sec.~\ref{ss:annealing} on fully-connected and convolutional neural networks for image classification on two datasets, viz., the MNIST dataset~\citep{lecun1998gradient} with $70,000$ $28\times28$ gray-scale images of digits, and the CIFAR-10 dataset~\citep{krizhevsky2009learning} with $60,000$ $32\times32$ RGB images of $10$ classes of objects. Even for large CNNs that are close to the state-of-the-art on CIFAR-10, we show that AnnealSGD results in improved generalization.

\subsection{Energy landscape of spin glasses}
\label{ss:expt_spin_glass}

Consider the perturbed 3-spin glass Hamiltonian with $n=100$ spins, in our notation this is a sparsely connected neural network with $100$ neurons on each layer, three hidden layers, threshold non-linearities and a fourth loss layer:
\beq{
    \label{eq:3_spin_perturbued}
    -\tH_{n,3} = \f{1}{n}\ \sum_{i,j,k=1}^{n}\ J_{i,j,k}\ \s_i \s_j \s_k + \sum_i\ h_i\ \s_i;
}
where $h_i \in N(0, \nu^2)$ and $J_{i,j,k} \sim N(0,1)$.
For a given a disorder $J$ we empirically find the local minima of the above Hamiltonian for values of $h$ in the three regimes:
\begin{enumerate}[(i)]
\item $\nu = 1/n$ for exponentially many critical points;
\item $\nu = \sqrt{3} (1 - 1/n)^{1/2}$ to be in the polynomial regime;
\item and a large value $\nu = 3$ to be in the totally trivialized regime with a single minimum.
\end{enumerate}

For an initial configuration $\s^1$ that is sampled uniformly on $S^{n-1}(\sqrt{n})$, we perform gradient descent with a fixed step size $\eta = 0.1$ and stop when $\norm{\nabla \tH}_2 \leq 10^{-4}$. The gradient can be computed to be
$$
    -\rbrac{\nabla \tH_{n,3}}_i = \f{1}{n}\ \sum_{j,k}\ \sqbrac{J_{ijk} \s_j \s_k + J_{jik} \s_j \s_k + J_{jki} \s_j \s_k} + h_i.
$$
Before each weight update, we project the gradient on the tangent plane of $S^{n-1}(\sqrt{n})$ and re-project {$\s^{k+1} \leftarrow \s^k - \eta \nabla \tH(\s^k)$} back on to the sphere.

We next perform a t-distributed stochastic neighbor embedding (t-SNE)~\citep{van2008visualizing} for $20,000$ runs of gradient descent from independent initial configurations. t-SNE is a nonlinear dimensionality reduction technique that is capable of capturing the local structure of the high-dimensional data while also revealing global structure such as the presence of clusters at several scales. This has been successfully used to visualize the feature spaces of deep networks~\citep{donahue2014decaf}. We use it to visualize the high-dimensional energy landscape.

The results of this simulation are shown in Fig.~\ref{fig:tsne}. Local minima are seen as isolated points with surrounding energy barriers in Fig.~\ref{fig:p3t0}. The saddle points in this t-SNE of the $100$-dimensional hyper-sphere are particularly interesting. They are seen as long ``canyons'', i.e., they are regions where the gradient is non-negative in all but a few directions, gradient descent is only free to proceed in the latter. As multiple independent runs progress along this canyon, we get the narrow connected regions in Fig.~\ref{fig:p3t0} and Fig.~\ref{fig:p3t1}.

\begin{figure*}[!tbh]
\centering
    \begin{subfigure}[t]{0.32\textwidth}
        \centering
        \includegraphics[width= 0.95 \textwidth]{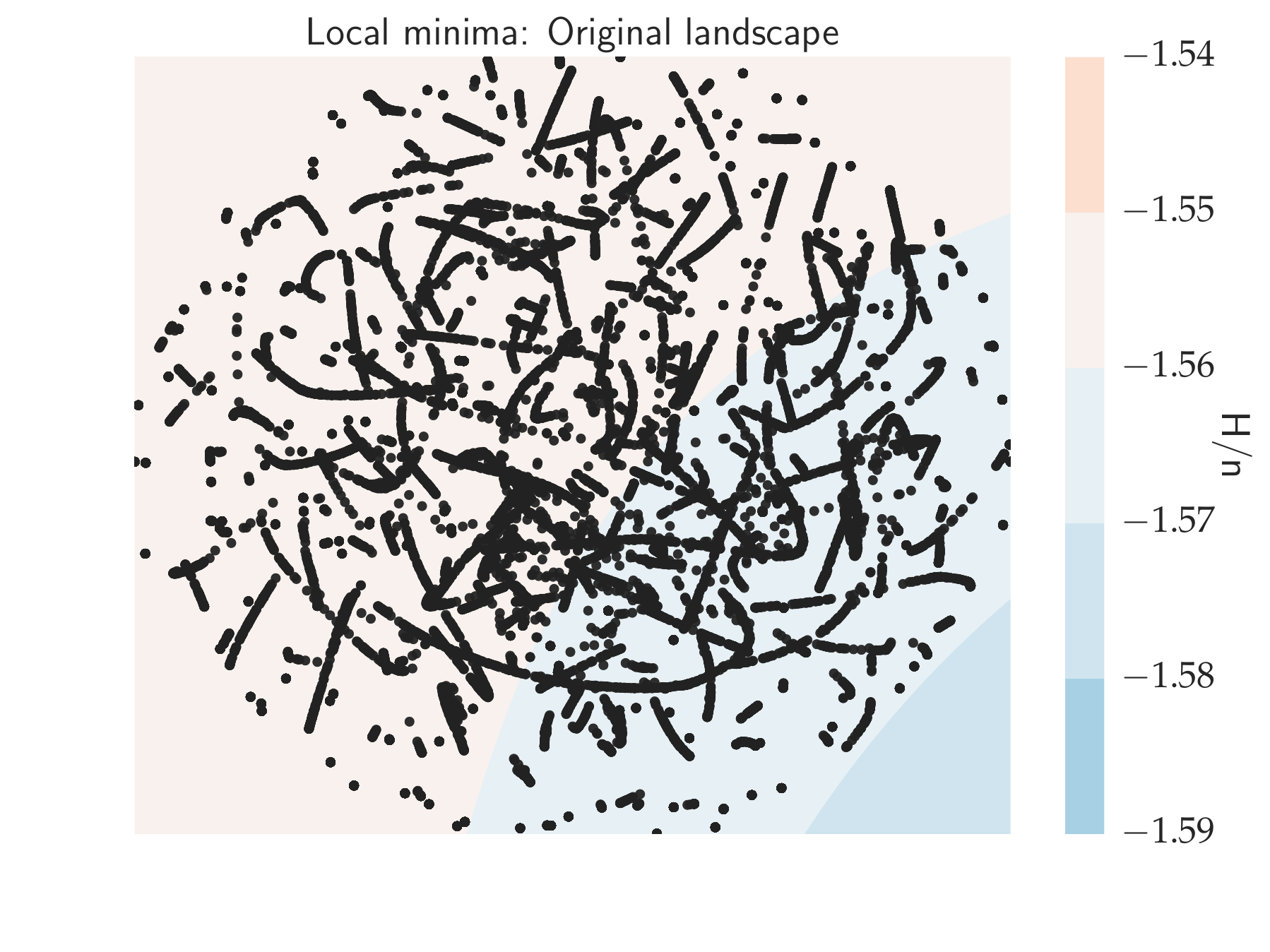}
        \caption{\small \linespread{1.05} Exponential regime: Local minima are seen here as isolated dots surrounded by high energy barriers while saddle points are seen as narrow connected regions, viz., regions where the gradient is very small in all but a few directions.}
        \label{fig:p3t0}
    \end{subfigure}
    \hfill
    \begin{subfigure}[t]{0.32\textwidth}
        \centering
        \includegraphics[width= 0.95 \textwidth]{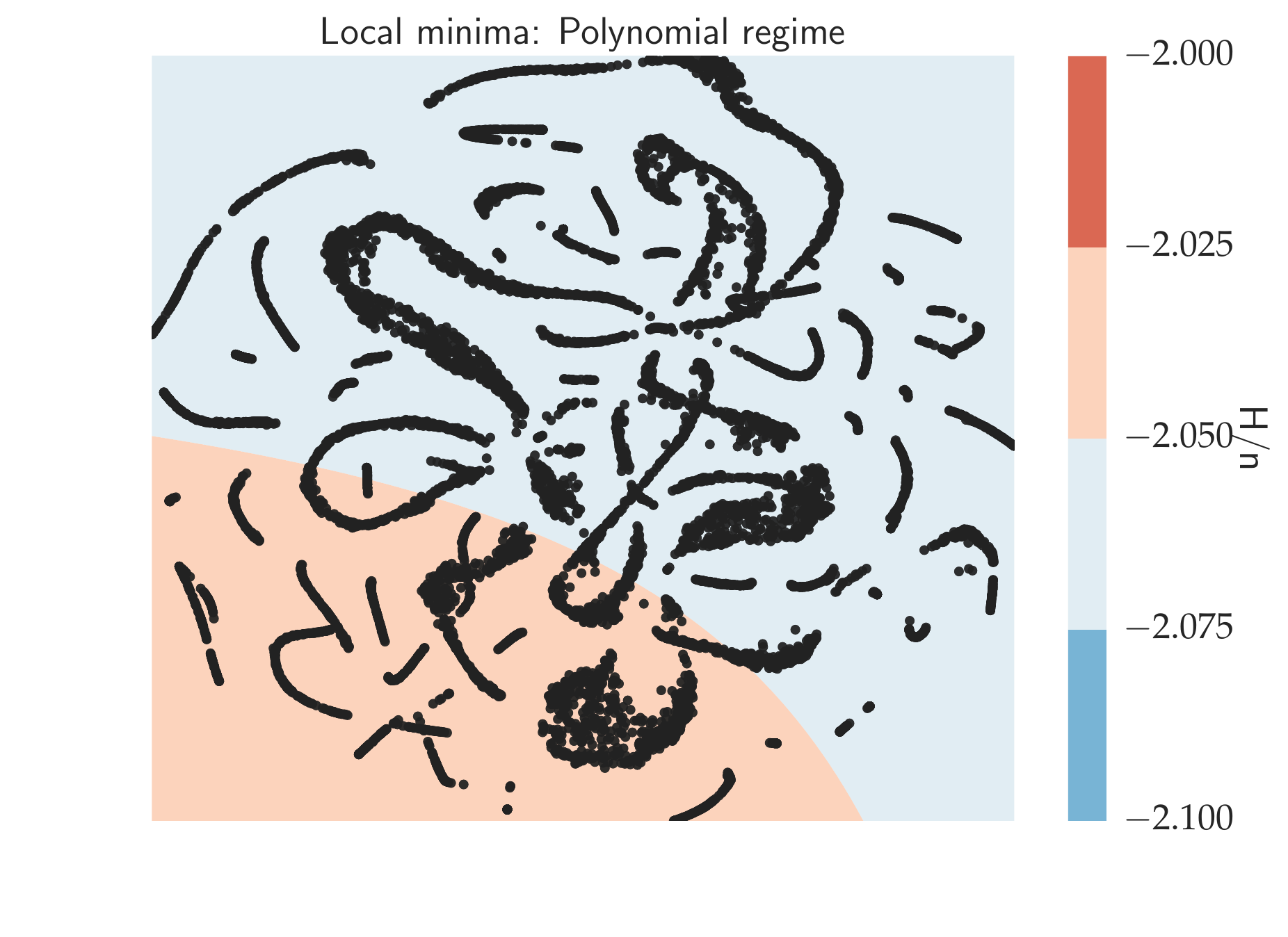}
        \caption{\small \linespread{1.05} Polynomial regime: The number of isolated clusters, i.e., local minima, is significantly smaller as compared to Fig.~\ref{fig:p3t0}. As the discussion in Sec.~\ref{ss:scaling_local_minima} predicts, the energy landscape seems to be full of saddle points in the polynomial regime.}
        \label{fig:p3t1}
    \end{subfigure}
    \hfill
    \begin{subfigure}[t]{0.32\textwidth}
        \centering
        \includegraphics[width= 0.95 \textwidth]{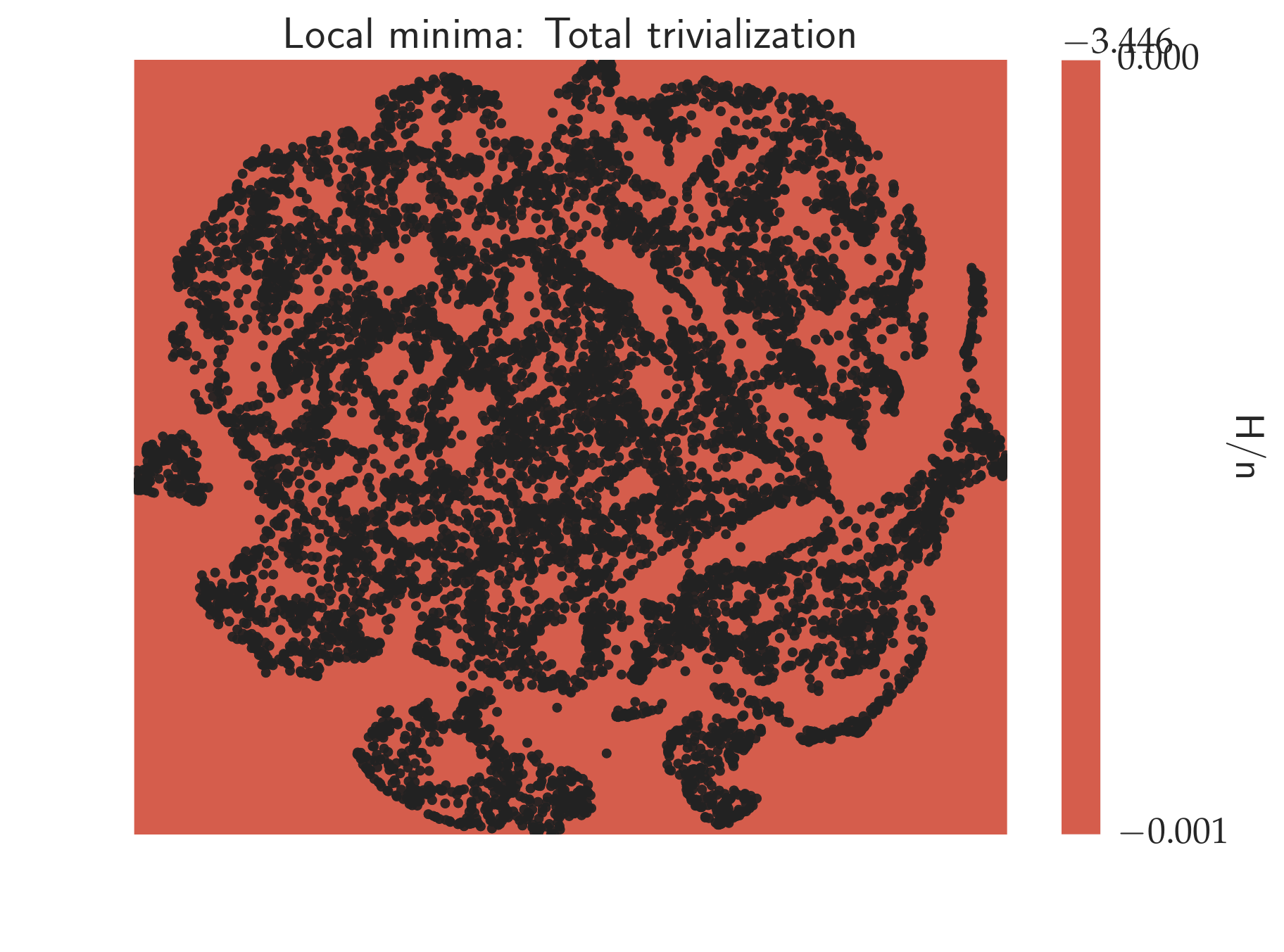}
        \caption{\small \linespread{1.05} Trivial regime: Gradient descent always converges to the same location. The average cosine distance (on $S^{n-1}(\sqrt{n})$) here is $0.02$ as compared to $1.16$ for Fig.~\ref{fig:p3t0} which suggests that this is indeed a unique local minimum.}
        \label{fig:p3t2}
    \end{subfigure}
\caption{Two-dimensional t-SNE~\citep{van2008visualizing} of $20,000$ of local minima discovered by gradient descent for the three regimes: exponential (Fig.~\ref{fig:p3t0}), polynomial (Fig.~\ref{fig:p3t1}) and trivial (Fig.~\ref{fig:p3t2}). The background is colored by the kernel density estimate of the value of the normalized Hamiltonian $H/n$. The numerical values of the normalized Hamiltonian in Fig.~\ref{fig:p3t0} are the non-asymptotic versions ($n=100$) of the values in Fig.~\ref{fig:energy_barriers}.}
\label{fig:tsne}
\end{figure*}

\subsection{Setup for deep neural networks}
\label{ss:expt_dnn}

The theoretical model in Sec.~\ref{ss:model_deep_networks} and the annealing scheme~\eqref{eq:nu_polynomial} depend on two main parameters: the number of neurons on each layer $n$ and the number of hidden layers $p$. Modern deep networks with multiple feature maps, convolutional kernels etc. do not conform to our theoretical architecture and we therefore use a heuristic to set these parameters during training. For all networks considered here, we set
$$
    n := \rbrac{\f{\#\ \trm{weights}}{p}}^{1/2}
$$
where $p$ is taken to be the number of layers. This estimate for the number of neurons is accurate for a fully-connected, dense neural network of $p$ layers. For CNNs, we consider the combined block of a convolution operation followed by batch normalization and max-pooling as a single layer of the theoretical model. All linear and convolutional layers in what follows have rectified linear units (ReLUs) non-linearities and we use the cross-entropy loss to train all our networks.

Thm.~\ref{thm:transition_critical_dot} depends on the spherical constraint introduced in Sec.~\ref{ss:model_deep_networks} quite crucially. Indeed, the fact that the spherical spin glass Hamiltonian is isotropic on the hyper-sphere is essential for us to be able to pick the perturbation $h$ in a ``random'' direction. Modern deep networks are not trained with additional constraints on weights, they instead use $\ell_2$ regularization (weight decay) which can be seen as a soft $\ell_2$ constraint on the weights. This case can also be analyzed theoretically and the annealing strategy only undergoes minor variations~\citep{fyodorov2013high} (the polynomial regime is achieved for $\abs{B} = \OO(n^{-1/2}$). We however use the annealing scheme in Sec.~\ref{ss:annealing} with $\abs{B} = \OO(n^{-1})$ here.

For MNIST, we scale each pixel to lie between $[-1,1]$. For CIFAR-10, we perform global contrast normalization and ZCA-whitening~\citep{goodfellow2013maxout} as the preprocessing steps. We do not perform any data augmentation. For both datasets, we use a subset of size $12\%$ of the total training data as a validation set and use it for hyper-parameter tuning of the standard pipeline without gradient perturbation. For the best parameters, we run both the standard optimizer and AnnealSGD with the same random seed, i.e., both networks have the same weight initialization, batch-sampling and the same hyper-parameters, and report the error rates on the test set averaged over a few runs.

\subsection{Fully-connected network}
\label{ss:expt:fc}

As a sanity check, we construct a thin, very-deep fully-connected network to demonstrate topology trivialization. Deep networks with a large number of connections, viz., fully-connected ones, are often challenging to train due to poorly initialized weights and vanishing gradients~\citep{sutskever2013importance}. The external magnetic field results in a gradient bias and thus artificially boosts the magnitude of the gradient which should effectively counter this problem.

Our ``$\mnistfc$'' network has $16$ hidden layers with $128$ hidden units and we use a momentum-based optimizer ADAM~\citep{kingma2014adam} with learning rate $10^{-3}$ for $50$ epochs to train it. We set $J = 10^{-3}$ and $\tau_0 = 500$ along with an $\ell_2$ regularization of $10^{-5}$. As Fig.~\ref{fig:mnistfc_test} shows, averaged over $10$ runs, AnnealSGD results in dramatically faster training in the beginning when the external field is stronger. Moreover, as predicted, the minimum of the absolute value of the back-propagated gradient with AnnealSGD in Fig.~\ref{fig:mnistfc_grad} is significantly larger than the original gradient. The final validation error for annealing is also slightly smaller ($2.48\%$) as compared to the error without it ($2.92\%$).

\begin{figure}[!tbh]
\centering
    \begin{subfigure}[b]{0.32\columnwidth}
        \centering
        \includegraphics[width=0.96\textwidth]{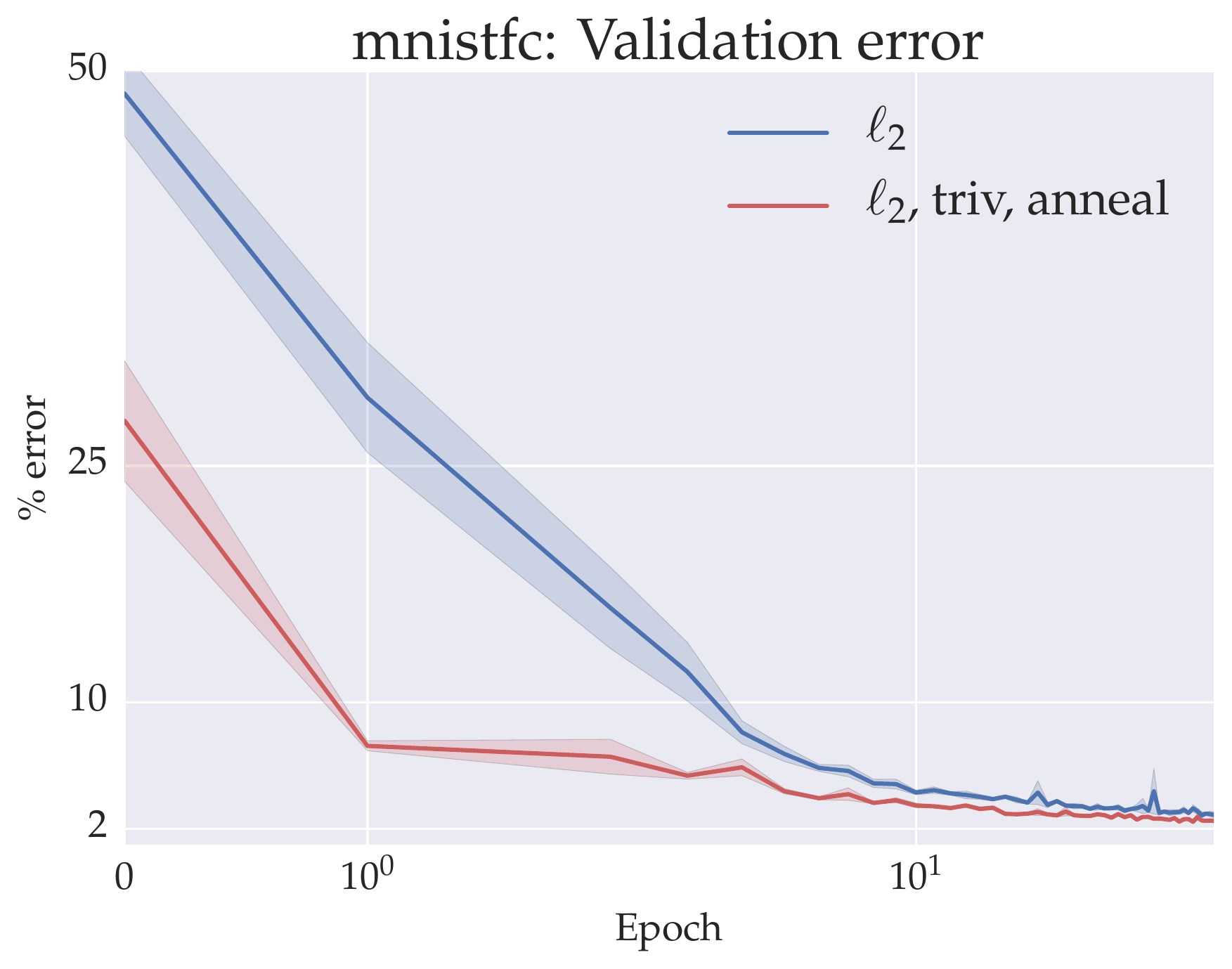}
        \caption{}
        \label{fig:mnistfc_test}
    \end{subfigure}
    \hfill
    \begin{subfigure}[b]{0.32\columnwidth}
        \centering
        \includegraphics[width=\textwidth]{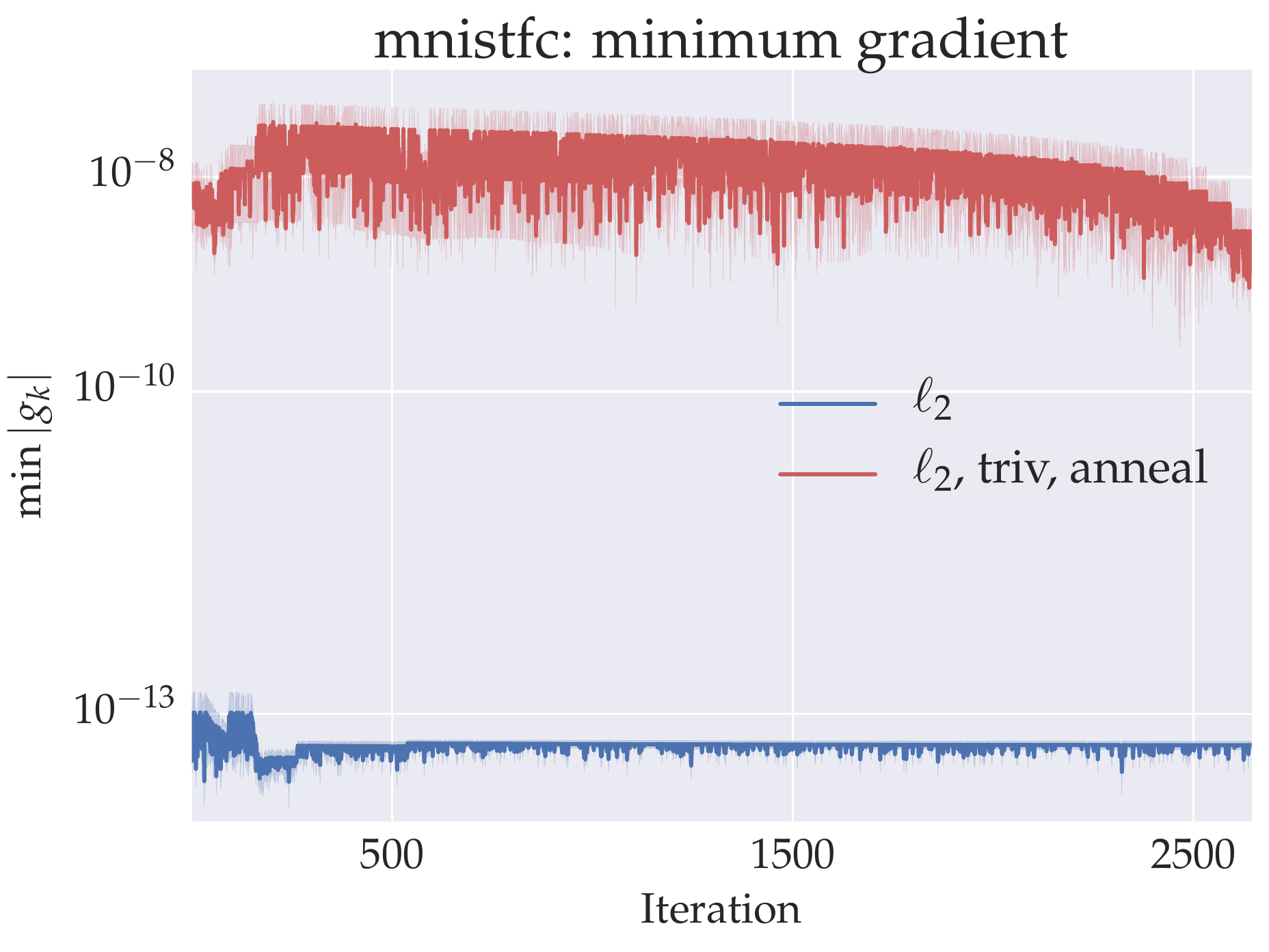}
        \caption{}
        \label{fig:mnistfc_grad}
    \end{subfigure}
    \hfill
    \begin{subfigure}[b]{0.32\columnwidth}
        \centering
        \includegraphics[width=0.96\textwidth]{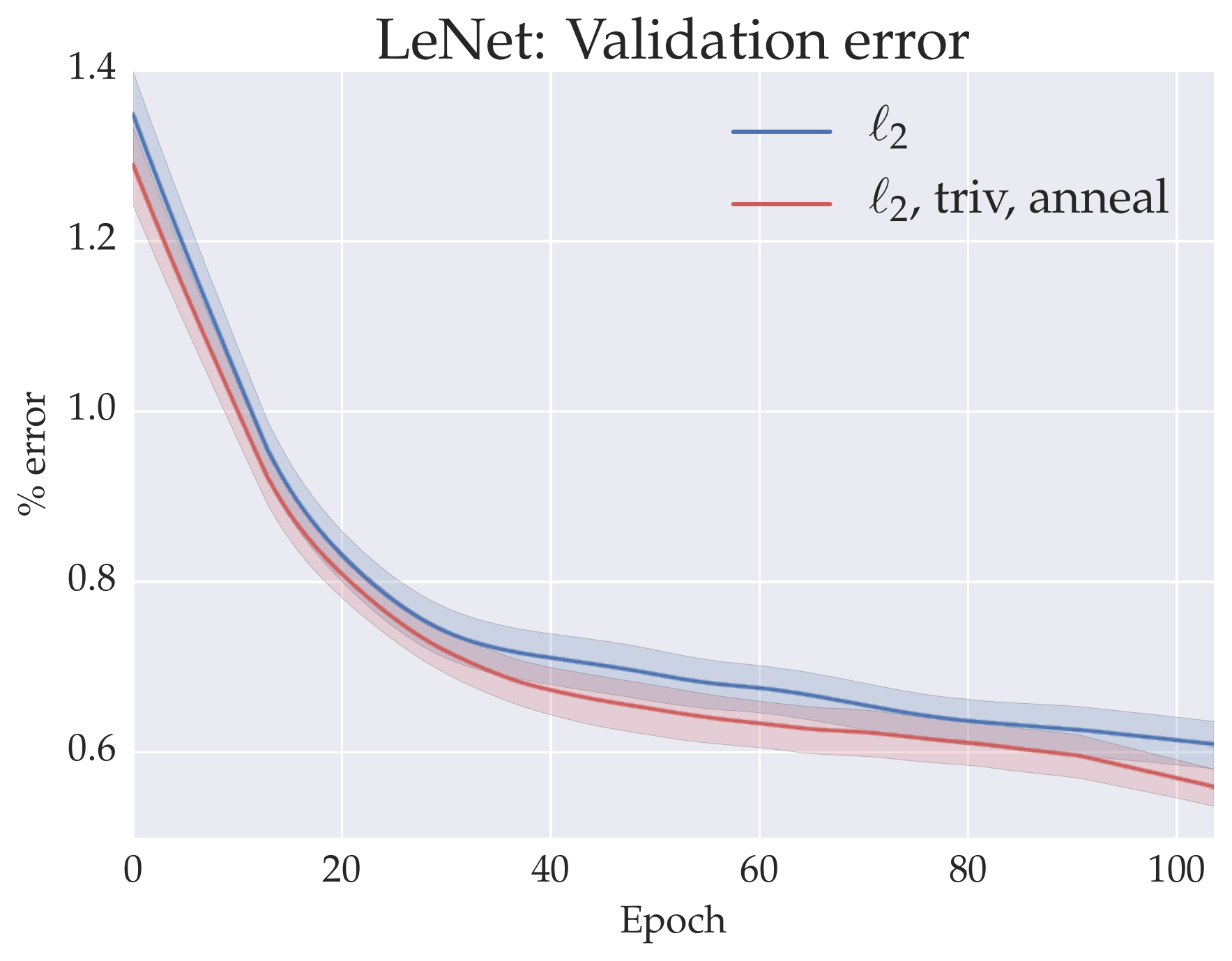}
        \caption{}
        \label{fig:mnistconv}
    \end{subfigure}
\caption{Fig.~\ref{fig:mnistfc_test}: $\mnistfc$ trained with ADAM (blue) vs. AnnealSGD (red). Fig.~\ref{fig:mnistfc_grad}: Minimum absolute value of the back-propagated gradient during training for ADAM (blue) vs. AnnealSGD (red). Fig.~\ref{fig:mnistconv} shows the validation error for LeNet trained using ADAM (blue) vs. AnnealSGD (red).}
\label{fig:mnist}
\end{figure}

\subsection{LeNet on MNIST}
\label{ss:expt:lenet}

We now train a LeNet-style network~\citep{lecun1998gradient} on $28\times28$ gray-scale images with the following architecture:
\aeqs{
    \block_{32} \to \block_{64} \to \linear_{512} \to \dropout_{\ 0.1} \to \softmax.
}
The $\block_{32}$ layer denotes a convolutional layer with a $5\times5$ kernel and $32$ features, $3\times 3$ max-pooling and batch-normalization. Note that as compared to the standard LeNet, we have replaced the $\dropout$ layer after the convolutional layer with batch-normalization; this is common in recent papers~\citep{ioffe2015batch}. We train using ADAM with a learning rate $\eta = 10^{-4}$ for $100$ epochs with a batch size of $256$ and $\ell_2$ regularization of $10^{-3}$ and compare it with AnnealSGD using $J = 10^{-4}$ and $\tau_0 = 10^3$.

Fig.~\ref{fig:mnistconv} shows the average validation error over $10$ runs. Typically, convolutional networks are very well-regularized and the speedup here in training is smaller than Fig.~\ref{fig:mnist}. However, we still have an improvement in generalization error: $0.55\ \pm\ 0.03 \%$ final validation error with annealing compared to $0.61\ \pm\ 0.04 \%$ without it.

\subsection{All-CNN-C on CIFAR-10}
\label{ss:expt:allcnn}

As a baseline for the CIFAR-10 dataset, we consider a slight modification of the All-CNN-C network of~\citet{springenberg2014striving}. This was the state-of-the-art until recently and consists of a very simple, convolutional structure with carefully chosen strides:
\aeqs{
    \data_{3\times32\times32}& \to \dropout_{\ 0.2} \to \big(\block_{96} \big)_{\times 3} \to \dropout_{\ 0.5} \to \big(\block_{192} \big)_{\times 3}\\
    \ldots & \to \dropout_{\ 0.5} \to  \big(\block_{192} \big)_{\times 2} \to \meanpool_{8\times 8} \to \softmax.
}
The $\block_{96}$ layer denotes a $\convolution_{3\times3\times96}$ layer followed by batch normalization (which is absent in the original All-CNN-C network). Unfortunately, we could not get ADAM to prevent over-fitting on this network and hence we train with SGD for $200$ epochs with a batch-size of $128$ and Nesterov's momentum. We set momentum to $0.9$ and the initial learning rate to $0.1$ which diminishes by a factor of $5$ after every $60$ epochs. We also use an $\ell_2$ regularization of $10^{-3}$. The optimization strategy and hyper-parameters were chosen to obtain error rates comparable to~\citet{springenberg2014striving}; however as Table~\ref{tab:cifar10} shows, we obtain a $0.66\%$ improvement even without annealing on their results ($8.42\%$ test error in $200$ epochs vs.\ $9.08\%$ error in $350$ epochs). This can probably be attributed to batch normalization.

We compare the training of the baseline network against AnnealSGD with $J = 10^{-4}$ and $\tau_0 = 10^3$ as the hyper-parameters over two independent runs. Table~\ref{tab:cifar10} shows a comparison with other competitive networks in the literature. We note that All-CNN-BN with annealing results in a significantly lower test error ($7.95\%$) than the baseline ($8.42\%$).

{
\renewcommand{\arraystretch}{1.2}
\begin{table}[H]
\centering
\resizebox{0.65 \columnwidth}{!}
{%
\begin{tabular}{p{6.5cm} r}
    Model & Test error ($\%$)\\  \hline
    NiN~\citep{lincy13} & $10.41$\\
    All-CNN-C~\citep{springenberg2014striving} & $9.04$\\
    P4-All-CNN~\citep{cohen2016group} & $8.84$\\
    All-CNN-BN (baseline) & $8.42\ \pm\ 0.08$\\
    All-CNN-BN (resampled, annealed noise) & $8.16\ \pm\ 0.07$\\
    All-CNN-BN (with topology trivialization) & $7.95\ \pm\ 0.04$\\
    ELU~\citep{clevert2015fast} & $6.55$\\ \hline
\end{tabular}
}
\caption{\small Error rates on CIFAR-10 without data augmentation.}
\label{tab:cifar10}
\end{table}
}

\subsection{Comparison to gradient noise}

\begin{wrapfigure}{r}{0.5 \columnwidth}
\centering
\includegraphics[width=0.45 \columnwidth]{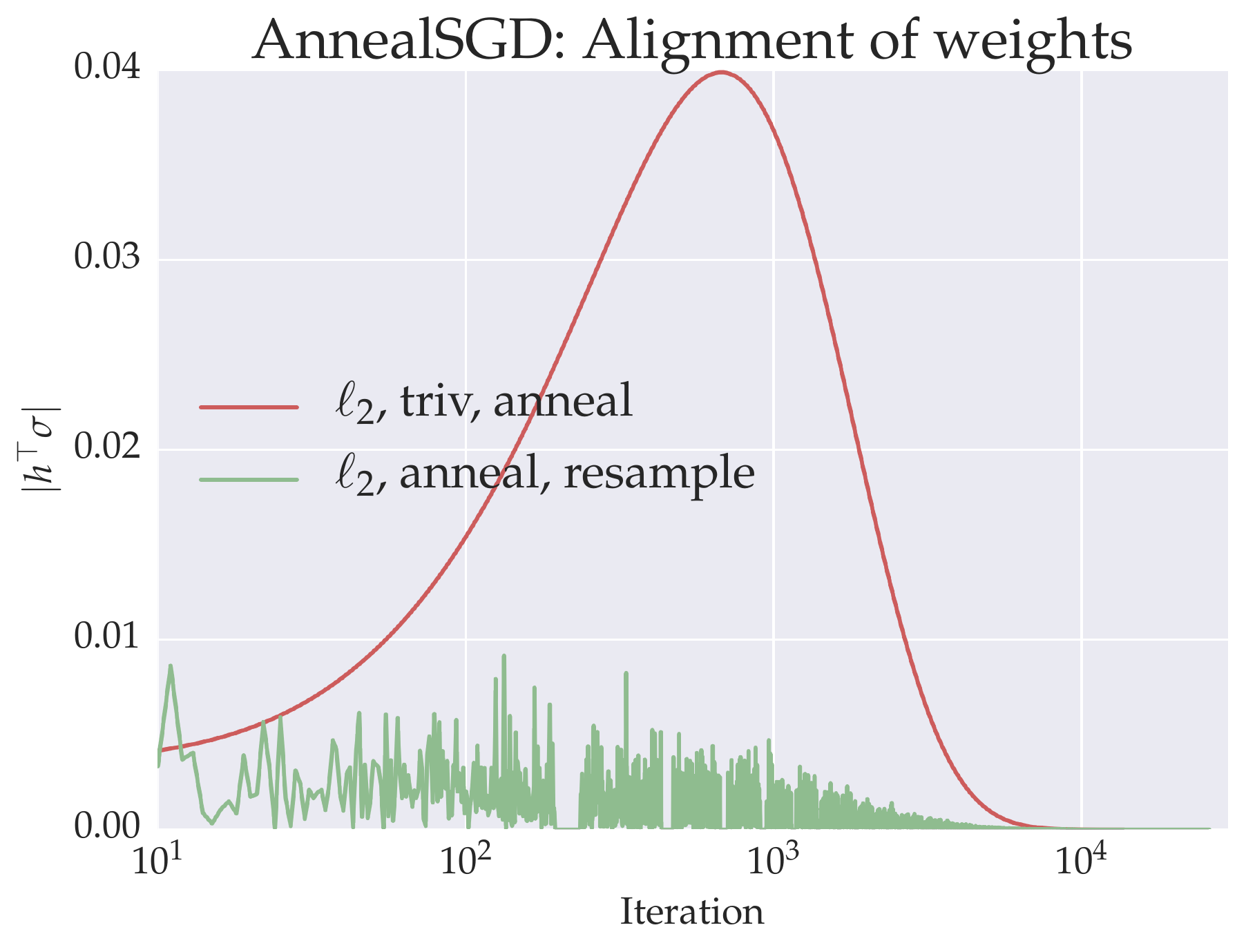}
\caption{\small Alignment of the weights with the perturbation term $\rbrac{\abs{h^\top \s}}$ for All-CNN-BN trained with AnnealSGD (red) vs. resampled, annealed additive noise (green) (cf. Table~\ref{tab:cifar10} for error).}
\vspace*{-0.5in}
\label{fig:cifarconv_align}
\end{wrapfigure}

Annealed additive gradient noise has also been recently explored for complex recurrent network architectures in~\citet{neelakantan2015adding}. We compare AnnealSGD with this approach and as Table~\ref{tab:cifar10} shows, although resampled noise improves slightly upon the baseline, fixing the direction of the additive term works better. This is in line with our analysis using an external magnetic field, as Fig.~\ref{fig:cifarconv_align} shows, resampling is detrimental because it forces the weights to align in different uncorrelated directions at each iteration whereas the annealing scheme simply introduces a (modulated) bias in a fixed direction.

\vspace*{0.4in}
\section{Discussion}
\label{s:discussion}

Our analysis was conducted for spin glasses and indeed, our random sparse model is very different from say, dense convolutional networks with Gabor filters. Nevertheless, a number of empirically observed phenomenon in the deep learning literature that can be rigorously proved for spin glasses connect the two, e.g., multiple equivalent local minima, low-order saddle points close to the ground state versus easy to escape high-order saddle points higher up etc. Insight on these properties can be exploited to design algorithms that can be tested empirically on modern deep networks, which is what we have done here. We certainly advocate further study into the properties of the landscape of convolutional architectures, for which one could use the analysis of spin glasses at least as a loose source of inspiration. This is a part of future work, well beyond the scope of this paper.

In view of the effects of an external magnetic field on the loss function of deep networks, it is an interesting question whether the inherent gradient noise of SGD has similar effects; this might inform a strategy for choosing batch sizes and data augmentation. Doing so however requires an analysis of the dynamics of gradient descent, instead of the ground-state thermodynamics of spin glasses which we have exploited here.

\section{Conclusions}
\label{sec:conclusions}

We have demonstrated ``AnnealSGD'', a regularized version of SGD for deep networks that uses an annealed additive gradient perturbation to control the complex energy landscape. We connected a random sparse model for deep networks to the Hamiltonian of a spin glass and identified three complexity regimes with exponential, polynomial and a constant number of local minima. AnnealSGD modulates the gradient perturbation to transition across these regimes. Our experiments show that this technique, which is very easy to implement, improves the upon the training of modern convolutional networks.

{
\footnotesize
\bibliography{chaudhari.soatto.aistats17}
\bibliographystyle{apalike}
}

\clearpage
\begin{appendices}

\setcounter{equation}{0}
\setcounter{theorem}{0}
\setcounter{figure}{0}
\makeatletter
\renewcommand{\thefigure}{A-\arabic{figure}}
\renewcommand{\thetheorem}{A-\arabic{theorem}}
\renewcommand{\theequation}{A-\arabic{equation}}

\renewcommand\thetable{\thesection\arabic{table}}
\renewcommand\thefigure{\thesection\arabic{figure}}

\section{Proof of Theorem~\ref{thm:deep_net_H}}
\label{s:proofs}

Consider the model of a deep network discussed in Sec.~\ref{ss:model_deep_networks} that is given by
\beq{
    Y = g \rbrac{\W{p+1} g \rbrac{ \W{p} \ldots\ g \rbrac{\W{1} X - \f{d}{3} \ones_n} - \f{d}{3} \ones_n} \ldots };
    \label{app:eq:X_to_Y}
}
where $g(x) = \ind_{ \cbrac{x \geq 0}}$ is the thresholding function. In order to get rid off the non-linearities, we can write~\eqref{app:eq:X_to_Y} in terms of its ``active paths'', i.e., all paths with non-zero activations from an input neuron $X_i$ to the output $Y$. We then have
\beq{
    \label{eq:Y_path_form}
    Y = \sum_{i=1}^n\ \sum_{\g \in \Gamma_i}\ X_i\ W_\g;
}
where $\Gamma_i$ is the set of all active paths connecting the $i^{th}$ input neuron to the output $Y$. Let $W_\g$ be the product of the weights along each path $\g$. Now define $\g_{i, i_1, \ldots, i_p}$ to be an indicator random variable that is 1 iff the path $i \to i_1 \to \ldots \to i_p \to Y$, i.e., the one that connects the $i_1^{th}$ neuron in the first hidden layer, $i_2^{th}$ neuron in the second hidden layer and so on, is active. With some abuse of notation, we will use the variable $\g$ to denote both the path and the tuple $(i, i_1, i_2, \ldots, i_p)$.

With an aim to analyze $Y$ for a generic data distribution $X$, we now work towards removing the dependency of the input $X$ in~\eqref{eq:Y_path_form}. First split the path into two parts, the first edge on layer 1 and the rest of the path as: $\g_{i, i_1, i_2, \ldots, i_p} = \g_{i, i_1}\ \g_{i_1, \ldots, i_p}$ (conditional upon $\g_{i,i_1} = 1$, the sub-paths are independent). This gives
\aeqs{
    Y = \sum_{i,\ip =1}^n\ \g_{i, i_1}\ \gip\ X_i\ W_{i,i_1}\ \Wip.
    \label{eq:Y_original}
}
This expression has correlations between different paths that are introduced by the sum over inputs $X_i$ which we approximate next. Define the net coupling due to the sum over the inputs as
$$
    Z_i := X_i\ \g_{i, i_1}\ W_{i, i_1}, \quad Z := \sum_{i=1}^n\ Z_i.
$$
Since $X_i \in \zoo$, we have $\var\ (X_i) \leq 1/4$ (the variance is maximized for a Bernoulli random variable with probability $1/2$), and according to assumption 2 in Sec.~\ref{ss:model_deep_networks}, $W_{i,i+1}$ is positive or negative with equal probability $d/2n$ while $\g_{i,i_1} =1$ with probability $d/n$ for a fixed $i_1$. This gives
$$
    \E\ Z_i = 0 \quad \trm{and}\ \quad \E\ (Z_i^2) \leq d^2/(4 n^2).
$$
We now use Bernstein's inequality to see that:
\aeqs{
    P \rbrac{ \abs{\sum_{i=1}^n\ Z_i} > t} &\leq \exp \rbrac{ -\f{t^2/2}{\var\ (Z) + t/3} }\\
    \implies \abs{Z} &\leq n^{-\f{1}{2} + \f{1}{p} + \e} \leq n^{-1/5} \qquad \trm{for}\ p > 5;
}
with high probability for any small $\e > 0$ (we picked $\e = 1/5$ above). This also gives $\var\ (Z) \leq \f{n^{-2/5}}{4}$. We now get a lower bound on $\var\ (Z)$ using Chebyshev's inequality where we use the assumption that the weight distribution is not all concentrated around zero. In particular, let us assume that $\P \rbrac{\abs{\W{k}_{ij}} > t} \geq t$ for some small $t = n^{-2/15}$; note that the special form $\P(W > t) \geq t$ is only for convenience. This gives $\var\ (W) \geq t^3 = n^{-2/5}$ and finally
\beq{
    \f{n^{-2/5}}{4} \geq \var\ (Z) \geq \E\ (X^2)\ n^{-2/5 + 1/p}.
    \label{eq:variance_Z}
}
We have thus shown that the net coupling due to inputs in our model is bounded and has variance of the order $\Th(n^{-2/5})$.

We now compute the probability of a path $\g_{i_1, \ldots, i_p}$ being active. Note that hidden units on the $k^{th}$ layer are uniform, but because of correlations of the layers above, they are not simply Bernoulli with probability $\f{\r}{n} \rbrac{\f{d}{2}}^{p-k}$. However, for networks that are at most $p = \OO(\log_d n)$ deep, we can bound the probability of the $\ell^{th}$ neuron being active on the $k^{th}$ layer~\citep[Proof of Lem.~6]{arora2013provable}:
$$
    \P \rbrac{\h{k}_\ell = 1} \in \sqbrac{ \f{3 \r}{4 n}\ \rbrac{\f{d}{2}}^{p-k}, \f{5 \r}{4 n}\ \rbrac{\f{d}{2}}^{p-k} };
$$
which gives $P \rbrac{\g_{i_1, \ldots, i_p} = 1} = \Th(n^{-(p-1)/2})$. The random variable $Z\ \gip$ is zero mean with variance of the order $\Th(n^{-(p-1)/2 -2/5})$ and we therefore replace it with a zero-mean standard Gaussian $\Kip$
\beq{
    Y_J \overset{law}{=} \f{J}{n^{(p-1)/4 + 1/5}}\ \sum_{\ip=1}^n\ \Kip \Wip,
    \label{eq:replace_J}
}
for some constant $J > 0$.

Since the entries of weight matrices $\W{k}$ are iid, the weights along each path $\g \in \Gamma_i$, i.e., $\Wip$ are independent and identically distributed. Note that there are at most $\OO(m^p)$ distinct values that $\Wip$ can take since each $W_{i_k, i_{k+1}}$ is supported over a set of cardinality $m = \Th(n)$ in the interval $[-N,\ N]$. For $\s_{l_k} \in \trm{supp}(W_{i_k, i_{k+1}})$ with $l_k \leq n$, we now write $\Wip$ as a sum of weighted indicator variables to get
\aeqs{
    \Wip = \sum_{l_1, \ldots, l_p = 1}^m\ \ind\cbrac{W_{i_1,i_2} = \s_{l_1}, \ldots, W_{i_p,Y} = \s_{l_p}}\ \rbrac{\prod_{k=1}^p\ \s_{l_k}},
}
to get
\aeqs{
    Y_J &= \f{J}{n^{(p-1)/4 + 1/5}}\ \sum_{l_1, \ldots, l_p = 1}^m\ \sum_{\ip=1}^n\ \Kip \ind\cbrac{W_{i_1,i_2} = \s_{l_1}, \ldots, W_{i_p,Y} = \s_{l_p}}\ \rbrac{\prod_{k=1}^p\ \s_{l_k}},\\
    &\triangleq \f{J}{n^{(p-1)/4  + 1/5}}\ \sum_{l_1, \ldots, l_p = 1}^m\ \Jlp \rbrac{\prod_{k=1}^p\ \s_{l_k}};
}
Note that $\Jlp$ is a zero-mean Gaussian with variance $(n/m)^p$ and we can write the scaled output $\hY = \f{n^{-(p-3)/4 + 1/5}}{\sqrt{m}}\ Y_J$ after a renaming of indices as
\beq{
    \hY \overset{\trm{law}}{=} \f{J}{m^{(p-1)/2}}\ \sum_{\ip = 1}^m\ \Jip \ \sip.
    \label{eq:Yhat}
}
Since each $\s_i$ has the same distribution as $\W{k}_{ij}$, the $p$-product $\sip$ can take at most $\binom{n}{p}$ distinct values for every configuration $\s$. Using Stirling's approximation, we can see that this is also of the order $\OO(m^p)$ for $p = \OO(\log n)$ and $m = \Th(n)$, i.e., if the discretization of weights grows with the number of neurons we do not lose any representational power by replacing the weights by spins. In the sequel, we will simply set $m = n$ the formulas undergo only minor changes if $m = \Th(n)$.

Now, the zero-one loss for target labels $Y^t \sim \Berdist(q)$ can be written as
\aeqs{
    L = \E_{Y^t} \abs{\hY(X^t) - Y^t} &= q(1-\hY) + (1-q)\ \hY\\
    &= q + (1-2q)\ \hY.
}
Define $H_{n,p} = (q-L)/(1-2q)$ to see that $-H_{n,p}$ has the same distribution as $\hY$. In Thm.~\ref{thm:deep_net_H} in the main writeup, we assimilate the factor $1-2q$ into the constant $J$.

\section{Quality of perturbed local minima}
\label{s:app:quality_perturbed_local_minima}

The main result of this section is Thm.~\ref{thm:bound_Y_s_delta_s} which says that if the external field is such that if $\nu$ is a constant, the perturbations of the normalized Hamiltonian are $\nu$ in the limit while the perturbations of local minima are at most $\OO(n^{-1/3})$. The development here is based on some lemmas in~\citet{subag2015extremal} which undergo minor modifications for our problem. The basic idea consists of approximating the perturbed Hamiltonian as a quadratic around a local minimum of the original Hamiltonian and analyzing the corresponding local minimum.

Let us consider the Hamiltonian
\beq{
    -\tH(\s) = -H(\s) + \f{1}{\sqrt{n}}\ h^\top \s;
    \label{eq:tH_diff_normalization}
}
where $h \in \reals^n$ is a vector with zero-mean Gaussian iid entries with variance $\nu^2$ and
$$
    -H(\s) = n^{-(p-1)/2} \sum_{\ip} \Jip\ \sip.
$$
Note that this is the same model considered in~\eqref{eq:tH_general} except that we have used a different normalization for the perturbation term. We first get rid off the spherical constraint by re-parameterizing the Hamiltonian around a local minimum $\s \in \crt_0(H)$. For $x = (x_1, \ldots, x_{n-1}) \in \reals^{n-1}$ with $\norm{x} \leq 1$ and any $y \in \reals^n$ consider the functions
\aeqs{
    P_E(x) &= \rbrac{x_1, x_2, \ldots, x_{n-1}, \sqrt{1 - \norm{x}^2}},\\
    S(y) &= \sqrt{n}\ y.
}
Now define
\beq{
    f_\s(x) \triangleq -\f{1}{\sqrt{n}}\ H\ \circ\ \th_\s\ \circ\ S\ \circ\ P_E(x);
}
where $\s \in S^{n-1}(1)$ is the normalized spin on the unit-sphere and $\th_\s$ is a rotation that sends the the north pole to $\s$. The function $f_\s(x)$ is thus the re-parameterization of $H(\s)$ around a local minimum $\s$, but normalized to have a unit variance. We define $\tf_\s(x)$ similarly and also define $\lf_\s(x)$ for the perturbation term $h^\top \s$ to see that~\eqref{eq:tH_diff_normalization} becomes
$$
    \sqrt{n}\ \tf_\s(x) = \sqrt{n} f_\s(s) + \lf_\s(x).
$$
We can now construct the quadratic approximation of $\tf_\s(x)$ (which we assume to be non-degenerate at local minima, i.e, the Hessian has full rank) to be
\aeqs{
    \sqrt{n}\ \tf_{\s, \trm{approx}}\ (x) \triangleq \sqrt{n}\ f_{\s, \trm{quad}}(x) + \lf_{\s, \trm{lin}}(x);
}
where we used the quadratic approximation of the original Hamiltonian and the linear approximation of the perturbation term:
\aeq{
    f_{\s, \trm{quad}}(x) &= f_\s(0) + \ag{\nabla f_\s(0), x} + \f{1}{2} \ag{\nabla^2 f_\s(0) x,\ x}, \notag\\
    \lf_{\s, \trm{lin}}(x) &= \lf_\s(0) + \ag{\nabla \lf_\s(0), x}
    \label{eq:quad_lin_approx}.
}
Let $Y_\s$ be the critical point of $\sqrt{n}\ \tf_{\s, \trm{approx}}$ and $V_\s$ be its value. We have,
\aeq{
    Y_\s &\triangleq -\f{1}{\sqrt{n}}\ \rbrac{\nabla^2 f_\s(0)}^{-1}\ \nabla \lf_\s(0);  \label{eq:Y_s}\\
    \Delta_\s &\triangleq \sqrt{n}\ \tf_\s(Y_\s) - \sqrt{n}\ f_\s(0) - \lf_\s(0), \notag\\
    &= -\f{1}{2 \sqrt{n}}\ \rbrac{\nabla \lf_\s(0)}^\top \rbrac{\nabla^2 f_\s(0)}^{-1}\ \nabla \lf_\s(0); \label{eq:delta_s}
}
where $Y_\s$ is the critical point of $\tf_\s$. The term $\sqrt{n} f_\s(0)$ is the value of the original Hamiltonian at the unperturbed minimum and thus $\D_\s$ is the perturbation in the critical value of the Hamiltonian due to the perturbations minus the contribution of the perturbation $\lf_\s(0)$ itself. We now allude to the following lemma that gives the distribution of the gradient and the Hessian of the Hamiltonian at the critical point, viz., $\nabla f_\s(0)$ and $\nabla^2 f_\s(0)$.

\begin{lemma}[\citet{auffinger2013random}, Lem.~3.2]
\label{lem:hessian_rmt}
The gradient $\nabla f_\s(0)$ is zero-mean Gaussian with variance $p$ and is independent of $\nabla^2 f_\s(0)$ and $f_\s(0)$. Also, conditional on $f_\s(0) = u$, the Hessian $\nabla^2 f_\s(0)$ has the distribution
\beq{
    \sqrt{2 (n-1) p (p-1)} M - p u I;
    \label{eq:hessian_rmt}
}
where $M \in \reals^{(n-1) \times (n-1)}$ is a GOE matrix and $I \in \reals^{(n-1) \times (n-1)}$ is the identity.
\end{lemma}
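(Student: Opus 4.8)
The plan is to exploit that $f_\s$ is, by construction, a centered Gaussian field on the tangent chart: it is a fixed linear functional of the iid couplings $\Jip$ composed with the smooth maps $\th_\s,\ S,\ P_E$. Consequently the triple $\rbrac{f_\s(0),\ \nabla f_\s(0),\ \nabla^2 f_\s(0)}$ is jointly Gaussian, and the entire statement reduces to computing its first two moments; the means are all zero, so everything is encoded in covariances. Moreover, since the law of $H$ on $S^{n-1}(\sqrt n)$ is invariant under rotations of $\reals^n$, conjugating by $\th_\s$ makes these covariances independent of the base point $\s$, so I may fix $\s$ to be the north pole and carry out a single computation in the chart $P_E$.

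The engine of the proof is the covariance kernel $K(x,x') \triangleq \E\sqbrac{f_\s(x)\, f_\s(x')}$. Using $\E\sqbrac{H(\sigma)H(\tau)} = n\,\rbrac{\ag{\sigma,\tau}/n}^p$ for $\sigma,\tau \in S^{n-1}(\sqrt n)$ together with the $1/\sqrt n$ normalization in the definition of $f_\s$, the prefactors cancel and $K(x,x') = u(x,x')^p$, where $u(x,x') = \ag{P_E(x),\,P_E(x')} = \ag{x,x'} + \sqrt{\rbrac{1-\norm{x}^2}\rbrac{1-\norm{x'}^2}}$ is the overlap of the two chart points (rotations preserve it). Every quantity in the lemma is then a derivative of $K$ evaluated at $x=x'=0$, and the only chart data I need at the pole are $u=1$, $\nabla u = 0$, the mixed second derivative $\partial_{x_i}\partial_{x'_j}u = \d_{ij}$, and the pure second derivative $\partial_{x_i}\partial_{x_j}u = -\d_{ij}$.

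From here the moments fall out by differentiating $K=u^p$ (writing $\partial_i$ for $\partial_{x_i}$). The value has variance $K(0,0)=1$. One derivative in each slot gives $\E\sqbrac{\partial_i f_\s\, \partial_j f_\s}= p\,\d_{ij}$ at the pole (only the $p\,u^{p-1}\partial_{x_i}\partial_{x'_j}u$ term survives since $\nabla u=0$), so the gradient components are iid $\trm{N}(0,p)$. The gradient's covariances with the value and with the Hessian both vanish at the pole, since after the expansion they reduce to derivatives of $u$ that carry a factor $\nabla u|_0=0$; joint Gaussianity upgrades these zero covariances to the asserted independence. The Hessian--value covariance comes from $K(x,0) = \rbrac{1-\norm{x}^2}^{p/2}$, giving $\E\sqbrac{\partial_{ij} f_\s\, f_\s(0)} = -p\,\d_{ij}$ and hence conditional mean $-p u\, I$ given $f_\s(0)=u$. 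Finally the fourth mixed derivative yields $\E\sqbrac{\partial_{ij}f_\s\,\partial_{kl}f_\s} = p^2\d_{ij}\d_{kl} + p(p-1)\rbrac{\d_{ik}\d_{jl} + \d_{il}\d_{jk}}$; subtracting the rank-one regression on the value, namely $p^2\d_{ij}\d_{kl}$, leaves the conditional covariance $p(p-1)\rbrac{\d_{ik}\d_{jl}+\d_{il}\d_{jk}}$, which is exactly the covariance structure of a GOE matrix (diagonal variance twice the off-diagonal). Matching against the chosen GOE normalization fixes the scale, and combining the conditional mean and covariance reproduces~\eqref{eq:hessian_rmt}.

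I expect the fourth-derivative step to be the only real obstacle. The bookkeeping there is a Faà di Bruno expansion of $\partial^4(u^p)$ over partitions of the four derivative slots: any partition with a singleton block dies because $\nabla u|_0=0$, leaving the single four-block term $p\,\partial^4 u|_0 = p\,\d_{ij}\d_{kl}$ and the three two-block pairings weighted by $p(p-1)$. Assembling these correctly, and then checking that the value-regression subtraction cancels precisely the $\d_{ij}\d_{kl}$ piece and leaves the clean symmetric tensor, is where care is required; the residual factor of $2$ in the prefactor $\sqrt{2(n-1)p(p-1)}$ is purely a matter of reconciling the GOE normalization (off-diagonal entry variance $1/\rbrac{2(n-1)}$) with the computed conditional covariance $p(p-1)\rbrac{\d_{ik}\d_{jl}+\d_{il}\d_{jk}}$.
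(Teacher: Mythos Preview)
Your proof is correct and follows the canonical covariance-kernel approach. Note, however, that the paper does not actually prove this lemma: it is quoted verbatim from \citet{auffinger2013random} (their Lem.~3.2) and simply invoked. Your argument---reduce to the north pole by rotation invariance, differentiate the kernel $K=u^p$ at the pole, read off independence from vanishing odd-order mixed derivatives, and recover the conditional law by Gaussian regression on $f_\s(0)$---is exactly the computation carried out in that reference, so there is nothing substantively different to compare. Your Fa\`a di Bruno bookkeeping in the fourth-derivative step is accurate, including the cancellation $p+p(p-1)=p^2$ that produces the clean $p^2\d_{ij}\d_{kl}$ term and the subsequent regression subtraction; the GOE scale matches once one fixes the convention $\E\sqbrac{M_{ij}M_{kl}}=\frac{1}{2(n-1)}\rbrac{\d_{ik}\d_{jl}+\d_{il}\d_{jk}}$.
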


We now compute bounds for $\norm{Y_\s}$ and $\norm{\Delta_\s}$ using the following two lemmas but let us describe a few quantities that we need before. It can be shown that the value of a $p$-spin glass Hamiltonian at local minima concentrates around a value
\beq{
    \label{eq:m_n_concentration}
    m_n = -n E_0 + \f{\log n}{2 \Th'(-E_0)} - K_0;
}
where $E_0$ is the lower energy bound in Fig.~\ref{fig:energy_barriers}, $\Th$ is the complexity of critical points discussed in Fig.~\ref{fig:energy_barriers}. In fact, the Hamiltonian is distributed as a negative Gumbel distribution with $m_n$ as the location parameter. The constants $C_0$ and $K_0$ are computed in~\citet[Eqn.~2.6]{subag2015extremal}. For a small $\d \in (0, p(E_0, E_{\trm{inf}}))$, define $\mathcal{V}(\d)$ to be the set of real symmetric matrices with eigenvalues in the interval
$$
    (p E_0 - 2 \sqrt{p(p-1)} - \d,\ p E_0 + 2 \sqrt{p(p-1)} + \d);
$$
and define the set $B_1 = \sqrt{n} \mathcal{V}(\d)$. The proof of Lem.~\ref{eq:g4_lemma17} hinges upon the observation in~\citet{subag2015complexity} that the second negative term (apart from the first GOE term) in the Hessian at a critical point~\eqref{eq:hessian_rmt} does not change the eigenvalue distribution of the GOE much.

\begin{lemma}
\label{lem:g3_lemma18}
For any $L > 0$ and
$$
    g_3(\s) = \D_\s + \f{\nu^2}{2 \sqrt{n}}\ \trace \cbrac{\rbrac{\nabla^2 f_\s(0)}^{-1}}, \quad B_3 = \rbrac{-n^{-\f{1}{2} + \e}, n^{-\f{1}{2} + \e}};
$$
there exists a sequence $e_{\e, \d}(n) \to 0$ such that
$$
    \P \rbrac{g_3 \notin B_3:\ \nabla^2 f_\s(0) = A_{n-1},\ \nabla f_\s(0) = 0,\ f_\s(0) = u} \leq e_{\e,\d}(n),
$$
for all $u \in n^{-1/2}\ (m_n - L, m_n + L)$ and $A_{n-1} \in \sqrt{n}\ \mathcal{V}(\d)$.
\end{lemma}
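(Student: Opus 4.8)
The plan is to recognize $g_3(\s)$ as a \emph{centered} Gaussian quadratic form in the perturbation gradient and then apply a Hanson--Wright / Bernstein concentration bound, using the good-set hypothesis $A_{n-1}\in\sqrt{n}\,\mathcal{V}(\d)$ to control the spectral norms of $(\nabla^2 f_\s(0))^{-1}$. First I would pin down the law of the perturbation gradient. Writing the reparameterized perturbation term as $\lf_\s(x)=h^\top \th_\s(P_E(x))$ and differentiating at $x=0$ (where the Jacobian of $P_E$ is the inclusion $\reals^{n-1}\hookrightarrow\reals^n$ onto the first $n-1$ coordinates), one gets $\nabla\lf_\s(0)=\Pi\,\th_\s^\top h$, where $\Pi$ is that coordinate projection. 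Since $\th_\s$ is a rotation and $h$ has i.i.d.\ $\trm{N}(0,\nu^2)$ entries, $\th_\s^\top h$ is again i.i.d.\ $\trm{N}(0,\nu^2)$, so $G:=\nabla\lf_\s(0)\sim\trm{N}(0,\nu^2 I_{n-1})$. Crucially $G$ comes from the external field $h$ and is independent of the Hamiltonian data $(\nabla f_\s(0),\nabla^2 f_\s(0),f_\s(0))$ appearing in Lem.~\ref{lem:hessian_rmt}, so conditioning on the event in the lemma leaves $G$ with this same law.

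Next I would compute the conditional mean and rewrite $g_3$. From~\eqref{eq:delta_s}, conditional on $\nabla^2 f_\s(0)=A_{n-1}$ we have $\D_\s=-\f{1}{2\sqrt{n}}\,G^\top A_{n-1}^{-1}G$, whose conditional mean is $-\f{\nu^2}{2\sqrt{n}}\trace(A_{n-1}^{-1})$ because $\E[GG^\top]=\nu^2 I$. Hence the extra term in the definition of $g_3$ is exactly minus this mean, and
\[
g_3(\s)=-\f{1}{2\sqrt{n}}\rbrac{G^\top A_{n-1}^{-1}G-\nu^2\trace(A_{n-1}^{-1})},
\]
a centered quadratic form. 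In the eigenbasis of $Q:=A_{n-1}^{-1}$ this is $-\f{1}{2\sqrt n}\sum_i\lambda_i(Q)(g_i^2-\nu^2)$ with $g_i$ i.i.d.\ $\trm{N}(0,\nu^2)$, a sum of independent sub-exponential variables to which Bernstein's inequality (equivalently Hanson--Wright) applies; since $\abs{g_3}>n^{-1/2+\e}$ forces the centered form to exceed $2n^{\e}$,
\[
\P\rbrac{\abs{g_3}>n^{-1/2+\e}}\le 2\exp\rbrac{-c\,\min\rbrac{\tfrac{n^{2\e}}{\nu^4\norm{Q}_{\trm{F}}^2},\ \tfrac{n^{\e}}{\nu^2\norm{Q}_{\trm{op}}}}}.
\]

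The key estimates then come from the good set. For $A_{n-1}\in\sqrt n\,\mathcal{V}(\d)$ every eigenvalue lies in $\sqrt n\,(pE_0-2\sqrt{p(p-1)}-\d,\ pE_0+2\sqrt{p(p-1)}+\d)$; for $\d$ small enough the lower endpoint is a strictly positive constant, because local minima concentrate near the ground state $-E_0$ where the limiting Hessian spectrum sits strictly above zero. Thus $\lambda_{\min}(A_{n-1})=\Omega(\sqrt n)$, giving $\norm{Q}_{\trm{op}}=\OO(n^{-1/2})$ and $\norm{Q}_{\trm{F}}^2=\sum_i\lambda_i(A_{n-1})^{-2}\le (n-1)\lambda_{\min}^{-2}=\OO(1)$. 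Plugging these in, the two arguments of the $\min$ are $\Omega(n^{2\e})$ and $\Omega(n^{1/2+\e})$, so for $\e<1/2$ the bound is $2\exp(-c'n^{2\e})=:e_{\e,\d}(n)\to 0$. Because these spectral bounds depend only on the good set---and, once $A_{n-1}$ is fixed, not on $u$ at all---the estimate is uniform over the stated ranges of $u$ and $A_{n-1}$, which is exactly the claim.

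The main obstacle is the Frobenius-norm control $\norm{A_{n-1}^{-1}}_{\trm{F}}=\OO(1)$: it is this bound that sets the $\OO(n^{-1/2})$ fluctuation scale of $g_3$ and makes the window $B_3$ of width $n^{-1/2+\e}$ the right one, namely $n^\e$ standard deviations wide. Establishing it rigorously hinges on the positive-definiteness of the Hessian on $\mathcal{V}(\d)$, i.e.\ $pE_0>2\sqrt{p(p-1)}$, which rests on the concentration $m_n\approx -nE_0$ from~\eqref{eq:m_n_concentration} together with the observation of~\citet{subag2015complexity} that the $-puI$ shift does not drag the spectral edge of the GOE term down to zero; this is precisely the ingredient adapted from the lemmas of~\citet{subag2015extremal}.
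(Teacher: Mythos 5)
Your proposal follows essentially the same route as the paper: identify $\nabla\lf_\s(0)$ as an i.i.d.\ Gaussian vector independent of the disorder, diagonalize $(\nabla^2 f_\s(0))^{-1}$ to write $g_3$ as the centered quadratic form $-\f{1}{2\sqrt{n}}\sum_i\l_i(Q)(g_i^2-\nu^2)$, and control it via the spectral bounds coming from $A_{n-1}\in\sqrt{n}\,\mathcal{V}(\d)$. The only difference is the last step, where you invoke Bernstein/Hanson--Wright to get an exponentially small $e_{\e,\d}(n)$, whereas the paper settles for a second-moment bound plus Chebyshev (giving $e_{\e,\d}(n)=\OO(n^{-2\e})$); both suffice for the statement, and your explicit remark that everything hinges on $pE_0>2\sqrt{p(p-1)}$ keeping $\mathcal{V}(\d)$ away from zero is exactly the implicit ingredient behind the paper's constant $C$.
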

\begin{proof}
First note that $\nabla \lf_\s(0) = h$ and the term $\Delta_\s$ evaluates to $-\f{1}{2 \sqrt{n}}\ h^\top \rbrac{\nabla^2 f_\s(0)}^{-1} h$. The gradient of the perturbation $\nabla \lf_\s(0) = h$ is independent of the disorder that defines $\nabla^2 f_\s(0)$. Thus, conditional on $\nabla^2 f_\s(0) = A_{n-1} \in B_1$, we have that $\nabla \lf_\s(0)$ is distributed as
$$
    \nu\ \sum_{i=1}^{n-1}\ W_i\ a_i;
$$
where $W_i \sim N(0,1)$ are standard Gaussian random variables and $a_i$ are the eigenvectors of $A_{n-1}$. Because $h \sim N(0, \nu^2 I)$, we now see that $g_3(\s)$ has the same distribution as
$$
    \f{\nu^2}{2 \sqrt{n}}\ \sum_{i=1}^{n-1}\ (1 - W_i^2)\ \f{1}{\l(A_{n-1})};
$$
where $\l(A_{n-1})$ are the eigenvalues of $A_{n-1}$ and $W_i \sim N(0,1)$ are standard Gaussian random variables. This is a zero-mean random variable with bounded second moment, indeed
\beq{
    \rbrac{\f{\nu^2 C}{2 n}}^2\ \sum_{i=1}^{n-1}\ \E \cbrac{\rbrac{1 - W_i^2}^2} = 2 \rbrac{\f{\nu^2 C}{2 n}}^2 (n-1);
    \label{eq:second_moment_g3_proof}
}
for some constant $C > 0$. The lemma now follows by Chebyshev's inequality.
\end{proof}

\begin{remark}
Note that we had used a different normalization for the magnetic field in~\eqref{eq:tH_diff_normalization} as compared to~\eqref{eq:tH_general}. Switching back to the original normalization, i.e., replacing $\nu \leftarrow \nu n^{1/2}$, we see from~\eqref{eq:second_moment_g3_proof} that we still have a bounded second moment if $\nu$ is a constant, in particular, if it does not grow with $n$.
\end{remark}

\begin{lemma}[\citet{subag2015extremal}, Lem.~17]
\label{eq:g4_lemma17}
For $\tau_{\e,\d} \to 0$ as $n \to \infty$ slowly enough and $L > 0$
$$
    g_4(\s) = \f{\nu^2}{2 \sqrt{n}}\ \trace \cbrac{\rbrac{\nabla^2 f_\s(0)}^{-1}}, \quad B_4 = (C_0 - \tau_{\e,\d}(n),\ C_0 + \tau_{\e,\d}(n));
$$
we have
$$
    \lim_{n \to \infty}\ \E \cbrac{ \abs{\sqrt{n}\ \s: H(\s) \in [m_n - L, m_n + L],\ g_4(\s) \notin B_4,\ \nabla^2 f_\s(0) \in \sqrt{n} \mathcal{V}(\d) } } = 0.
$$
\end{lemma}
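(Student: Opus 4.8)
The plan is to treat this as a first-moment (Kac–Rice) estimate and to exploit the fact that, once we condition on the value of the Hamiltonian and on the Hessian at a critical point, the quantity $g_4(\s)$ is no longer random: it is exactly $\f{\nu^2}{2}$ times the normalized trace of the inverse Hessian. Concretely, by the Kac–Rice formula the expected number of critical points $\s$ with $H(\s)\in[m_n-L,m_n+L]$ and $\nabla^2 f_\s(0)\in\sqrt n\,\mathcal{V}(\d)$ is $\OO(1)$ near the ground state (this is the complexity count behind Fig.~\ref{fig:energy_barriers}, the one that produces the negative Gumbel law with location $m_n$). Writing $N'$ for this count and $N$ for the same count with the extra constraint $g_4(\s)\notin B_4$, it suffices to show that, under the Kac–Rice conditional law given $\nabla f_\s(0)=0$, $f_\s(0)=u$ with $u$ in the rescaled window, and $\nabla^2 f_\s(0)\in\sqrt n\,\mathcal V(\d)$, the probability that $g_4(\s)\notin B_4$ tends to $0$ uniformly in $u$; dominating $\E\{N\}$ by $\E\{N'\}$ times this vanishing conditional probability then closes the argument.

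Next I would make the conditional Hessian explicit via Lem.~\ref{lem:hessian_rmt}: given $f_\s(0)=u$, $\nabla^2 f_\s(0)$ is distributed as $\sqrt{2(n-1)p(p-1)}\,M - p u I$ with $M$ a GOE matrix. Rescaling by $\sqrt n$, its empirical spectral distribution converges to a semicircle law shifted by the deterministic amount $-pu/\sqrt n$; on the window $H(\s)\in[m_n-L,m_n+L]$ one has $u/\sqrt n\to -E_0$ up to corrections of order $(\log n)/\sqrt n$ and $L/\sqrt n$, so the shift places the lower edge of the limiting support at (asymptotically) zero. Consequently $g_4(\s)=\f{\nu^2}{2\sqrt n}\trace\{(\nabla^2 f_\s(0))^{-1}\}=\f{\nu^2}{2}\int \l^{-1}\,d\hat\mu_n(\l)$, where $\hat\mu_n$ is the rescaled empirical spectrum, converges to $\f{\nu^2}{2}\int \l^{-1}\,d\mu(\l)=:C_0$, and this integral is finite precisely because the semicircle density vanishes like $\sqrt{\l-\l_-}$ at the lower edge, making $\l^{-1}$ integrable against $\mu$. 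The observation of~\citet{subag2015complexity}, already invoked above, that the deterministic term $-puI$ does not disturb the bulk or edge behaviour relative to a pure GOE is exactly what licenses this identification of the limit.

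Finally I would upgrade convergence of the mean to concentration: the normalized resolvent trace $\f1n\trace\{(\nabla^2 f_\s(0))^{-1}\}$ is a linear eigenvalue statistic of a GOE-type matrix and is self-averaging, its fluctuations having variance $\OO(n^{-2})$ away from the edge; hence for $\tau_{\e,\d}(n)\to 0$ chosen to decay slowly enough, the conditional probability of $g_4(\s)\notin B_4$ tends to $0$. The main obstacle, and the reason the restriction $\nabla^2 f_\s(0)\in\sqrt n\,\mathcal V(\d)$ and the margin $\d$ are indispensable, is the control of the smallest eigenvalues: because the lower edge of the spectrum sits at zero for the ground-state window, the test function $\l^{-1}$ is singular there and standard linear-statistics results do not apply off the shelf. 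One must rule out, with overwhelming conditional probability, eigenvalues anomalously close to zero and show that the near-edge contribution to $\trace\{(\nabla^2 f_\s(0))^{-1}\}$ is stable — this is precisely the delicate edge estimate supplied by the lemmas of~\citet{subag2015complexity,subag2015extremal} that we adapt here.
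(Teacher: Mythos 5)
Your overall architecture --- a Kac--Rice first-moment bound, reduction to a conditional probability given $\nabla f_\s(0)=0$, $f_\s(0)=u$ and the Hessian, the conditional Hessian law $\sqrt{2(n-1)p(p-1)}\,M - puI$ from Lem.~\ref{lem:hessian_rmt}, and concentration of the linear eigenvalue statistic $\trace\cbrac{\rbrac{\nabla^2 f_\s(0)}^{-1}}$ --- is the right one, and it is essentially the route taken in \citet{subag2015extremal}; note that the paper itself offers no proof of this lemma and imports it wholesale, with only the remark that the $-puI$ shift does not disturb the GOE spectral statistics.

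However, there is a concrete error in your identification of the spectral edge, and it propagates into your account of what the technical difficulty is. In the window $H(\s)\in[m_n-L,m_n+L]$ one has $u/\sqrt{n}\to -E_0$, so the conditional Hessian, rescaled by $\sqrt{n}$, has limiting spectrum supported on $\bigl(p E_0 - 2\sqrt{p(p-1)},\ p E_0 + 2\sqrt{p(p-1)}\bigr) = \bigl(p(E_0-E_\infty),\ p(E_0+E_\infty)\bigr)$ with $E_\infty = 2\sqrt{(p-1)/p}$. Since $E_0 > E_\infty$ strictly for $p\ge 3$, the lower edge sits at $p(E_0-E_\infty)>0$, not at zero; the edge would sit at zero only for critical points at the threshold energy $-E_\infty$, which is not the window of this lemma. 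This is precisely why $\mathcal V(\d)$ is defined with $\d < p(E_0-E_\infty)$: on the event $\nabla^2 f_\s(0)\in\sqrt{n}\,\mathcal V(\d)$ all eigenvalues exceed $\sqrt{n}\,\bigl(p(E_0-E_\infty)-\d\bigr)>0$, the test function $\l\mapsto\l^{-1}$ is bounded and Lipschitz on the relevant spectral range, and $C_0$ is finite simply because the support of the shifted semicircle is bounded away from the origin --- not because of square-root vanishing of the density at a zero edge. Consequently the ``delicate edge estimate ruling out eigenvalues anomalously close to zero'' that you present as the crux is not needed and is not what \citet{subag2015complexity,subag2015extremal} supply; their contribution is rather to show that the Hessian lies in $\sqrt{n}\,\mathcal V(\d)$ with probability high enough that the restriction costs nothing in the first-moment count, and that the deterministic shift $-puI$ leaves the GOE linear statistics (hence the self-averaging of the resolvent trace) intact. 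With the edge correctly located, your concentration step goes through by standard bounded-Lipschitz linear-statistics estimates, and the remainder of your sketch is sound.
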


We can now easily combine the estimates of Lem.~\ref{lem:g3_lemma18} and Lem.~\ref{eq:g4_lemma17} to obtain:

\begin{theorem}
\label{thm:bound_Y_s_delta_s}
The perturbation of the normalized Hamiltonian is $\nu$ in the limit $n \to \infty$ while the perturbed local minima lie within a ball of radius $n^{-\a}$ with $\a \in (1/4,1/3)$ centered at the original local minima if $\nu$ is a constant. More precisely for all critical points $\s \in \crt_0(H)$,
\aeqs{
    \lim_{n \to \infty}\ \f{1}{\sqrt{n}}\ &\abs{\tf_\s(Y_\s) - f_\s(0)} = \nu,\\
    \norm{Y_\s} &\leq n^{-\a}.
}
\end{theorem}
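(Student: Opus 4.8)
The plan is to exploit the first-order expansion of the perturbed minimum already set up in~\eqref{eq:Y_s} and~\eqref{eq:delta_s}, reducing both claims to the control of $\nabla\lf_\s(0)=h$ against the inverse Hessian $(\nabla^2 f_\s(0))^{-1}$. For the energy statement I would start from the identity $\sqrt{n}\,\tf_\s(Y_\s)-\sqrt{n}\,f_\s(0)=\lf_\s(0)+\D_\s$ that is built into the definition of $\D_\s$ in~\eqref{eq:delta_s}, so that the change $\tf_\s(Y_\s)-f_\s(0)$ splits into the direct field response $\lf_\s(0)=h^\top\s$ and the second-order shift $\D_\s$. The term $\lf_\s(0)$ is a single Gaussian with standard deviation proportional to $\nu$ (its variance is $\nu^2\norm{\s}^2$), and it is this term that carries the value $\nu$ through the limit; the work is therefore to show that $\D_\s$ is of strictly lower order and does not corrupt this leading behaviour.

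To bound $\D_\s$ I would use the two concentration lemmas already in hand. Writing $\D_\s=g_3(\s)-g_4(\s)$ with $g_4(\s)=\f{\nu^2}{2\sqrt{n}}\,\trace\cbrac{(\nabla^2 f_\s(0))^{-1}}$, Lem.~\ref{eq:g4_lemma17} pins $g_4$ to the deterministic constant $C_0$ up to a vanishing window, for all but a set of minima that is negligible in expectation and whose Hessians lie in $\sqrt{n}\,\mathcal{V}(\d)$, while Lem.~\ref{lem:g3_lemma18} confines the residual $g_3$ to $B_3=(-n^{-1/2+\e},n^{-1/2+\e})$ with probability $1-e_{\e,\d}(n)$. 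Combining these places $\D_\s$ within $o(1)$ of $-C_0$, which is subleading relative to the field term, so only the $\lf_\s(0)$ contribution survives and $\f{1}{\sqrt{n}}\abs{\tf_\s(Y_\s)-f_\s(0)}\to\nu$. Throughout, one must restrict to the typical energy window $u\in n^{-1/2}(m_n-L,m_n+L)$ dictated by the concentration~\eqref{eq:m_n_concentration} of $H$ at low-lying minima, which is precisely where both lemmas hold.

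For the displacement bound I would start from~\eqref{eq:Y_s} with $\nabla\lf_\s(0)=h$, giving $\norm{Y_\s}\le \f{1}{\sqrt{n}}\,\norm{(\nabla^2 f_\s(0))^{-1}}_{\mathrm{op}}\,\norm{h}$. Since $h$ has $n-1$ iid $N(0,\nu^2)$ coordinates, $\norm{h}$ concentrates at $\nu\sqrt{n}$, so the estimate collapses to $\norm{Y_\s}\lesssim \nu/\l_{\min}(\nabla^2 f_\s(0))$, i.e. everything is governed by the smallest eigenvalue of the Hessian. Here Lem.~\ref{lem:hessian_rmt} is decisive: conditional on $f_\s(0)=u$ the Hessian is the shifted GOE $\sqrt{2(n-1)p(p-1)}\,M-puI$, and at a minimum living near the ground-state energy $-E_0$ the shift places the bottom of the spectrum essentially at the left spectral edge. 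The smallest eigenvalue is then set by the GOE edge and sits at scale $n^{\a}$, with $\a$ just below $1/3$ once the $n^{-1/2+\e}$ slack of Lem.~\ref{lem:g3_lemma18} and the edge fluctuations are absorbed; this yields $\norm{Y_\s}\le n^{-\a}$ for $\a\in(1/4,1/3)$.

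The main obstacle is exactly this control of $\l_{\min}(\nabla^2 f_\s(0))$ near the spectral edge. Away from the edge the Hessian eigenvalues are $\Th(\sqrt{n})$ and the bound would be immediate, but low-lying minima push the smallest eigenvalue onto the soft edge, where Tracy--Widom fluctuations at scale $n^{-2/3}$ relative to the bulk width $\sqrt{n}$ compete with the $O(\log n/n)$ upward shift coming from $m_n$ in~\eqref{eq:m_n_concentration}. Showing that the edge eigenvalue stays at scale $n^{\a}$ with sufficiently high probability, uniformly over the exponentially many minima, is the delicate step, and it is exactly what confines the attainable exponent to the open interval $(1/4,1/3)$; I would import the relevant extremal-eigenvalue estimates from~\citet{subag2015extremal,subag2015complexity} rather than re-derive them.
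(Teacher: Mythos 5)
Your treatment of the first claim follows the paper's proof essentially step for step: the same decomposition $\D_\s = g_3(\s) - g_4(\s)$, the same use of Lem.~\ref{lem:g3_lemma18} and Lem.~\ref{eq:g4_lemma17} to pin $\D_\s$ within $o(1)$ of $-C_0$, and the same identification of $\lf_\s(0)$ (equivalently $\norm{h}$) as the term that survives the $1/\sqrt{n}$ normalization and yields $\nu$. The only caveat, which you share with the paper, is that this really delivers an upper bound $\le \nu + o(1)$ rather than the stated equality.

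The displacement bound is where your proposal contains a genuine error. You assert that at low-lying minima the smallest Hessian eigenvalue is pushed onto the soft spectral edge, lives at scale $n^{\a}$, and must be controlled through Tracy--Widom fluctuations, and that this is what forces $\a \in (1/4, 1/3)$. This is backwards. By Lem.~\ref{lem:hessian_rmt}, conditional on $f_\s(0) = u$ with $u \approx -\sqrt{n}\,E_0$, the Hessian is $\sqrt{2(n-1)p(p-1)}\,M - puI$: a semicircle shifted to the right by $p\sqrt{n}\,E_0$, whose left edge sits at $\sqrt{n}\,p(E_0 - E_\infty) > 0$ and is therefore of order $\sqrt{n}$, bounded away from zero, because $E_0 > E_\infty = 2\sqrt{(p-1)/p}$ at the energies where $\crt_0$ concentrates. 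This is precisely what the conditioning $\nabla^2 f_\s(0) \in \sqrt{n}\,\mathcal{V}(\d)$ with $\d < p(E_0 - E_\infty)$ encodes: $\l_{\min} \ge c\sqrt{n}$ deterministically on that event, with no edge fluctuations to absorb. (Near-singular Hessians occur at the highest-energy minima near $-nE_\infty$, not near the ground state.) With $\l_{\min} = \Th(\sqrt{n})$, your own chain of inequalities collapses to $\norm{Y_\s} \le \f{c}{n} \norm{h}$, and the step that actually remains --- which your proposal omits --- is the probabilistic control of $\norm{h}$: the paper finishes with the tail of the Chi variable $Q_{n-1} = \norm{h}/\nu$, namely $\P\rbrac{Q_{n-1} \ge n^{1/2 - \a}/(c\nu)} \to 0$, and it is this tail, together with the $n^{-1/2+\e}$ slack from Lem.~\ref{lem:g3_lemma18}, that produces the exponent window $\a \in (1/4, 1/3)$. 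In short, the extremal-eigenvalue machinery you propose to import from~\citet{subag2015extremal} is not needed here, while the Chi-tail estimate that is needed is missing from your argument.
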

\begin{proof}
The above two lemmas together imply that $\abs{g_4(\s)} \leq C_0$ with high probability and likewise, $\abs{\Delta_\s + g_4(\s)} \leq n^{-1/2 + \e}$. Together, this gives that
\aeq{
    \abs{\Delta_\s} &\leq C_0 + n^{-1/2 + \e}, \notag\\
    \sqrt{n} \abs{\tf_\s(Y_\s) - f_\s(0)} &\leq \norm{h} + C_0 + n^{-1/2 + \e}.
    \label{eq:bound_delta_s}
}
We now set $\nu \leftarrow \nu n^{1/2}$ in~\eqref{eq:tH_diff_normalization} to match the normalization used in the main paper to get that the perturbations to the normalized Hamiltonian are
\beq{
    \f{1}{\sqrt{n}} \abs{\tf_\s(Y_\s) - f_\s(0)} \leq \nu + \f{C_0}{n} + n^{-3/2 + \e}.
    \label{eq:norm_H_perturbations}
}
Since the gradient $\nabla f_\s(0)$ is independent of $\nabla^2 f_\s(0)$ and $f_\s(0) = u$, we have the following stochastic domination
$$
    \norm{Y_\s} \leq \f{c}{n}\ \norm{h};
$$
for some constant $c > 0$. The conditional probability in Lem.~\ref{lem:g3_lemma18} is now computed to be
\beq{
    \P \rbrac{Q_{n-1} \geq \f{n^{\f{1}{2}-\a}}{c \nu}};
    \label{eq:bound_Y_s}
}
where $Q_{n-1}$ is a standard Chi random variable with $n-1$ degrees of freedom, which goes to zero if $\nu$ is a constant that goes not grow with $n$. In particular, we have that $\norm{Y_\s} \leq n^{-a}$.
\end{proof}

\end{appendices}

\end{document}